\documentclass[10pt,journal,twocolumn,twoside]{IEEEtran} 
\usepackage{cuted}            
\usepackage{color}            
\usepackage{cite}
\usepackage{bm}               
\usepackage{array}            
\usepackage{graphicx}         
\usepackage{algorithm,algorithmic}        
\usepackage{amsmath, amssymb, amsfonts, mathrsfs}   
\usepackage{amsthm}
\usepackage{booktabs}         
\usepackage{stfloats}         
\usepackage{fixltx2e}        
\usepackage{cases}            
\usepackage{mathtools}        
\usepackage{footmisc}         
\usepackage{epstopdf}
\usepackage[T1]{fontenc}      
\usepackage{titlesec}
\usepackage{makecell}         
\usepackage{subfig}           
\usepackage{caption}          
\usepackage{balance}
\usepackage{diagbox}
\usepackage{url}
\usepackage{pifont}
\newtheorem{lemma}{\textit{Lemma}}

\newcommand\figref[1]{Fig.~\ref{#1}}

\newcommand{\tabref}[1]{Table~\ref{#1}}
\newcommand{\sectionref}[1]{Section~\ref{#1}}
\newcommand{\algref}[1]{Algorithm~\ref{#1}}

\begin{document}

\title{Goal-Oriented Semantic Communication for ISAC-Enabled Robotic Obstacle Avoidance}

\author{\IEEEauthorblockN{Wenjie Liu, Yansha Deng, \IEEEmembership{Senior Member, IEEE,} and Henk Wymeersch, \IEEEmembership{Fellow, IEEE}}

\thanks{\copyright 2026 IEEE. Personal use of this material is permitted. Permission from IEEE must be obtained for all other uses, in any current or future media, including reprinting/republishing this material for advertising or promotional purposes, creating new collective works, for resale or redistribution to servers or lists, or reuse of any copyrighted component of this work in other works.}
\thanks{Wenjie Liu and Yansha Deng are with the Department of Engineering, King’s College London, Strand, London WC2R 2LS, U.K. (e-mail: wenjie.liu@kcl.ac.uk; yansha.deng@kcl.ac.uk) (Corresponding author: Yansha Deng).}
\thanks{Henk Wymeersch is with the Department of Electrical Engineering, Chalmers University of Technology, Gothenburg, Sweden (e-mail: henkw@chalmers.se).}
}
	
\maketitle

\begin{abstract}
Obstacle avoidance is a fundamental task in mobile robotics and has been extensively studied over the past decades. However, existing studies are fundamentally limited by an exclusive reliance on robot's onboard sensors, which restricts the field of view and lacks the global understanding of dynamic environments. How to leverage the base station (BS) to enable sensing and control of mobile robots for reliable obstacle avoidance remains largely underexplored. To fill this gap, we investigate an integrated sensing and communication (ISAC)-enabled BS for the unmanned aerial vehicle (UAV) obstacle avoidance task, and propose a goal-oriented semantic communication (GOSC) framework for the BS to transmit sensing and command and control (C\&C) signals efficiently and effectively. Our GOSC framework establishes a closed loop for sensing--C\&C generation--sensing and C\&C transmission: 
For sensing, a Kalman filter (KF) is applied to continuously predict UAV positions, mitigating the reliance of UAV position acquisition on continuous sensing signal transmission, and enhancing position estimation accuracy through sensing–prediction fusion. 
Based on the refined estimation position provided by the KF, we develop a Mahalanobis distance-based dynamic window approach (MD-DWA) to generate precise C\&C signals under uncertainty, in which we derive the mathematical expression of the minimum Mahalanobis distance required to guarantee collision avoidance.
Finally, for efficient sensing and C\&C signal transmission, we propose an effectiveness-aware deep Q-network (E-DQN) to determine the transmission of sensing and C\&C signals based on their value of information (VoI). The VoI of sensing signals is quantified by the reduction in uncertainty entropy of UAV’s position estimation, while the VoI of C\&C signals is measured by their contribution to UAV navigation improvement. 
Extensive simulations validate the effectiveness of our proposed GOSC framework. Compared to the conventional ISAC transmission framework that transmits sensing and C\&C signals at every time slot, GOSC achieves the same 100\% task success rate while reducing the number of transmitted sensing and C\&C signals by 92.4\% and the number of transmission time slots by 85.5\%. 
\end{abstract}
\begin{IEEEkeywords}
    Goal-oriented semantic communication, sensing and communication, robotic obstacle avoidance, Kalman filter, Mahalanobis distance, and value of information.
\end{IEEEkeywords}

\section{Introduction}
\IEEEPARstart{O}{bstacle} avoidance is one of the fundamental tasks in mobile robotics, which aims at navigating a robot towards its destination without collisions in environments populated with obstacles \cite{OA}. Over the past decades, it has been extensively studied due to its broad applications in areas such as autonomous vehicles, unmanned aerial vehicles (UAVs), service robotics, and industrial automation. Broadly, existing studies on robotic obstacle avoidance can be classified into two categories: multi-sensor fusion approaches \cite{sensor_fusion} and collision-avoidance algorithmic approaches \cite{OA_alg}. The former focuses on integrating data from multiple onboard sensors--such as cameras, inertial measurement units (IMUs), and LiDARs--in efficient ways to provide richer environmental information that supports navigation decision-making \cite{sensor}. The latter emphasizes the  design of more sophisticated obstacle avoidance algorithms, leveraging techniques from control theory, optimization, and artificial intelligence (AI) to enhance navigation safety and efficiency \cite{OA_AI}.

However, a common limitation of existing mobile robot obstacle avoidance research is that the environmental perception is based solely on robot's onboard sensors, which inherently leads to two drawbacks: a restricted sensing range of each robot that prevents global environmental awareness, and the challenge in effective coordination among multiple robots via global view. Integrated sensing and communication (ISAC) \cite{ISAC, reviewer4_1, reviewer4_2} is a promising technique to overcome these two drawbacks. With ISAC, the base station (BS) can act as a global coordinator of multiple robots, providing a global view and control. This can be achieved by transmitting sensing signals to detect the positions of robots and environmental obstacles; based on the sensed positions, the BS could subsequently generate and transmit command-and-control (C\&C) signals via downlink communication to specify the robots' movement. 

As an emerging technique, ISAC has attracted increasing attention in recent years. However, existing studies on ISAC is primarily limited to the sensing and single link communication performance optimization, e.g., channel capacity \cite{capacity}, energy consumption \cite{energy}, mean-square-error (MSE) \cite{mse}, Cram\'{e}r-Rao bound (CRB) \cite{CRB}, and the inherent trade-off between sensing and communication \cite{trade-off}. The research question of ``How should the BS jointly design sensing and C\&C signals to control robots for safe and reliable obstacle avoidance'' remains largely unexplored. A commonly adopted approach in the existing ISAC literature is to transmit ISAC signals continuously \cite{continuous}. Nevertheless, this strategy presents a critical drawback: ensuring safe navigation requires the ISAC signals to be transmitted at each time slot with a high frequency, which results in high communication and computation costs. Notably, when there are no obstacles in the vicinity of the robots, continuous ISAC transmission yields redundant information and unnecessary resource consumption. 

Goal-oriented semantic communication (GOSC) \cite{GOSC} has recently emerged as a promising paradigm to address the problem of redundant data transmission. Different from deep joint source and channel coding (JSCC) frameworks \cite{JSCC, Reviewer2}, which typically rely on end-to-end deep neural network (DNN) training and latent feature embeddings, these frameworks may suffer from poor interpretability and sensitivity to wireless channel variations. GOSC effectively mitigates these limitations by extracting and transmitting only the semantic representation that directly contributes to the application objective \cite{SR1, SR2}. Owing to these advantages, recent efforts have explored the application of GOSC in robotic control \cite{xu, wu}. It is worth noting that, for robotic control applications, the semantic representation corresponds to the critical control information required for motion control, such as thrust, roll angle, yaw angle, and velocity of the C\&C data \cite{xu}.
 Xu \textit{et al.} \cite{xu} defined semantic-level and effectiveness-level performance metrics for C\&C signals, and designed a general task-oriented semantic-aware framework to reduce the redundant transmission of C\&C signals. Wu \textit{et al.} \cite{wu} proposed a joint age of information (AoI) and value of information (VoI) based queue ranking strategy to prioritize those effectiveness-critical C\&C signals received by the UAV. However, \cite{xu} and \cite{wu} focused solely on the transmission of C\&C signals under the ideal assumption that the BS knows the accurate location of UAV in real-time. This assumption is unrealistic in practice since sensing inevitably introduces detection errors. While in the context of ISAC, the transmission of C\&C signals inherently depends on the accuracy of sensing results. This intrinsic coupling between sensing and communication highlights the necessity to jointly design the GOSC transmission for both sensing and C\&C signals. 

Motivated by the above, the main contributions of this work are summarized as follows:
\begin{itemize}
    \item We investigate an ISAC-enabled BS for UAV obstacle avoidance task from a GOSC perspective. Different from JSCC-based end-to-end semantic communication frameworks, our work adopts GOSC that operates at the both semantic and effectiveness levels, addressing \emph{when} and \emph{what} semantic representations should be transmitted to optimize task performance. The objective is to enable the UAV to safely and quickly reach its destination while minimizing the total number of transmitted sensing and C\&C signals. To the best of our knowledge, this is the first work to investigate efficient integrated sensing and C\&C signaling codesign in ISAC systems from a GOSC perspective.
    \item We propose a unified GOSC framework to achieve the objective, which jointly integrates state estimation, C\&C signal generation, and transmission scheduling under a VoI-driven semantic decision structure. Specifically, the framework forms a closed loop of sensing--C\&C generation--adaptive transmission, enabling efficient and task-oriented signaling:
\begin{enumerate}  
    \item For sensing, we apply a Kalman filter (KF) to continuously predict the UAV's position based on the most recently transmitted C\&C signals, which enables the BS to acquire a reference position of the UAV without the needs to transmit sensing signals. When a sensing signal is transmitted, it can also fuse sensing and prediction results to provide a more accurate position estimation of the UAV.
    \item Based on the UAV position provided by the KF, we propose a Mahalanobis distance-based dynamic window approach (MD-DWA) to generate precise C\&C signals under estimation uncertainty. In particular, we derive a tractable mathematical expression for the minimum Mahalanobis distance under a fixed safety constraint, enabling simultaneous consideration of probabilistic uncertainty and deterministic collision avoidance.
    \item We design an effectiveness-aware deep Q-network (E-DQN) to determine whether to transmit sensing and C\&C signals based on their VoI. Unlike conventional metrics such as data size or channel conditions, the VoI is quantified by task relevance--via uncertainty reduction for sensing signal and navigation improvement for C\&C signal--to enable E-DQN to learn transmission policies based on semantic importance.
\end{enumerate}  
    \item Extensive simulations validate the effectiveness of our proposed GOSC framework. Compared to the traditional ISAC transmission framework that transmits sensing and C\&C signals at every time slot, GOSC achieves the same 100\% task success rate while reducing the number of transmitted sensing and C\&C signals by 92.4\% and  the required transmission time slots by 85.5\%.
\end{itemize}

 
The rest of this paper is organized as follows. \sectionref{model} presents the system model and problem formulation; \sectionref{trad} introduces a traditional ISAC signal transmission framework; \sectionref{proposed} describes our proposed GOSC framework; \sectionref{simulation} presents simulation results and analysis; and \sectionref{conclusion} concludes the paper.

\textbf{Notation}. Unless otherwise specified, we denote column vectors as bold lowercase italics (e.g., $\bm{a}$), matrices as bold uppercase italics, (e.g., $\bm{A}$), and constants as uppercase letters, (e.g., $\mathrm{A}$). The $i$-th element of a vector and the $(i,j)$-th element of a matrix are denoted by $[\bm{a}]_i$ and $[\bm{A}]_{i,j}$, respectively.
The Euclidean norm is represented by $\|\cdot\|$, and the Mahalanobis distance by $\|\cdot\|_M$.
The transpose, conjugate, Hermitian, and inverse of a matrix are denoted by $(\cdot)^\top$, $(\cdot)^*$, $(\cdot)^\mathrm{H}$ and $(\cdot)^{-1}$, respectively.
The Gaussian distribution is denoted as $\mathcal{N}$, while the complex Gaussian distribution is denoted as $\mathcal{CN}$.
The operator $\odot$ represents element-wise multiplication.
A column vector of ones is denoted as $\bm{1}$, and a column vector of zeros is denoted as $\bm{0}$.
The identity matrix of size $\mathrm{N}\times\mathrm{N}$ is denoted as $\bm{I}_{\mathrm{N}}$, and a zero matrix of size $\mathrm{N}\times\mathrm{N}$ is denoted as $\bm{0}_{\mathrm{N}}$.



\section{System Model and Problem Formulation} \label{model}
In this section, we provide a brief introduction to the ISAC-enabled BS for the UAV obstacle avoidance task. We then describe the sensing and C\&C signal transmission and reception. Next, the kinematic models of the UAV and obstacles are presented, followed by the beamforming model. Finally, a problem is formulated with the aim to transmit sensing and C\&C signals efficiently.

\subsection{Scenario Description}
As shown in \figref{system_model}, we consider an ISAC-enabled BS for the UAV obstacle avoidance task in open airspace, where the BS sends sensing signals to detect the position of UAV, which is then used to generate and transmit C\&C signals. The C\&C signals specify UAV's speed and heading angle to guide it flying towards the destination while avoiding dynamic obstacles. 
In this context, dynamic obstacles refer to moving objects (e.g., other UAVs or birds) in the sky whose speeds/positions vary over time and may potentially interfere with the UAV’s trajectory.
The whole task period can be discretized into $I$ time slots, and the time slot index is denoted as $t_i$, where $i\in\{0, 1,2,..., I\}$ and the time slot interval $\Delta t = t_{i+1}-t_i~(\forall i)$ is a constant. Without loss of generality, we assume the UAV flies in the $x-y$ plane at a fixed altitude $\mathrm{H}$. The real horizontal position of UAV at $t_i$ is denoted as $\bm{p}(t_i) = [p_x(t_i),~p_y(t_i)]^\top$. Note that the BS can not obtain the real position of UAV $\bm{p}(t_i)$, and can only send sensing signal to detect the position of UAV. The detected horizontal position is denoted as $\hat{\bm{p}}(t_i)=[\hat{p}_x(t_i),~\hat{p}_y(t_i)]^\top$. We assume that the horizontal positions of obstacles are detected by other cooperative BSs. All BSs are interconnected via high-capacity fiber backhaul, enabling low-latency information exchange and reliable synchronization. Consequently, the delay and synchronization errors introduced by backhaul communication are assumed negligible and do not affect the UAV control process. The real and detected obstacle positions are denoted as $\bm{p}_o(t_i)$ and $\hat{\bm{p}}_o(t_i)$, respectively, where $o\in\{1,2,3,..., N_o\}$ and $N_o$ is the total number of obstacles. Due to the thermal noise and quantization noise, there are always detection errors between the real position and the detected position, both for the UAV and obstacles \cite{dect_errors}. Under such detection errors, our goal is to design an efficient and reliable joint sensing and C\&C transmission scheme that enables the UAV to avoid dynamic obstacles and reach its destination as quickly as possible.
\begin{figure}[t] 
    \centering
    \includegraphics[width=0.43\textwidth]{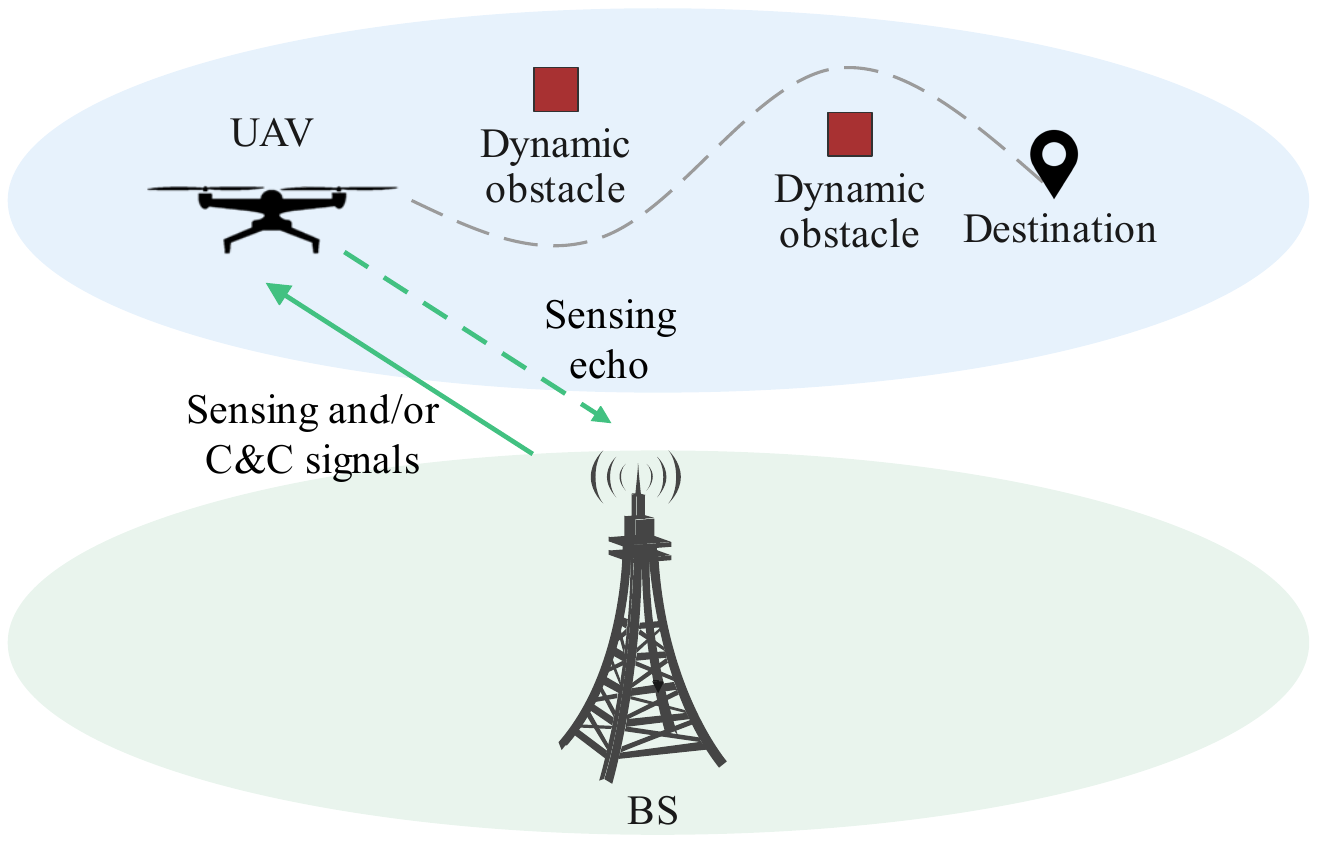}
    \caption{ISAC-enabled BS for the UAV obstacle avoidance task in open airspace.}
    \label{system_model}
\end{figure}

\subsection{Sensing and C\&C Signal Transmission Models}
We consider a mono-static multiple-input multiple-output (MIMO) system, in which the BS uses the same antenna array to transmit millimeter wave (mmWave) sensing and/or C\&C signals and receive sensing echoes. We assume that the BS is equipped uniform linear array (ULA) with $\mathrm{K}$ antennas, which transmit orthogonal frequency division multiplexing (OFDM) signal across $\mathrm{M}$ subcarriers with a subcarrier spacing $\Delta f$. The antenna element spacing is $\mathrm{D}=\lambda_c/2$, where $\lambda_c$ is the carrier wavelength. The OFDM signal has a symbol duration of $\mathrm{T}_{\text{sym}} = \mathrm{T}_{\text{cp}} + \mathrm{T_c}$, where $\mathrm{T}_{\text{cp}}$ is the cyclic prefix (CP) duration and $\mathrm{T_c} = 1/\Delta f$ is the elementary OFDM symbol duration. The complex baseband OFDM signal for sensing at instant time $t$ within the $t_i$-th time slot can be denoted as
\begin{equation}
    \mathrm{x}_i^s(t) = \frac{1}{\sqrt{\mathrm{M}}} \sum_{m=0}^{\mathrm{M}-1}z_{m}^se^{j2\pi m\Delta f(t-t_i)}\text{rect}\Big(\frac{t-t_i}{\mathrm{T_{sym}}}\Big),
\end{equation}
where $z_{m}^s$ is the complex sensing symbol mapped on the $m$-th subcarrier and $\mathbb{E}\{|z^s_m|^2\}=1$, $\text{rect}(\cdot)$ is the rectangular function and is defined as 
\begin{equation}
    \text{rect}(x) \triangleq \begin{cases}
        1, \quad  0 \leq x \leq 1, \\
        0, \quad \text{otherwise}.
    \end{cases}
\end{equation}

We assume that the C\&C data is modulated into $\mathrm{N_{cc}}$ symbols via quadrature phase shift keying (QPSK). It should be mentioned that sensing typically requires a large bandwidth to achieve high-resolution detection, while transmitting the C\&C signals only consumes a rather small bandwidth. In other words, $\mathrm{M}$ is much greater than $\mathrm{N_{cc}}$. Thus, different from conventional OFDM mapping where each symbol is assigned to a single subcarrier, we adopt a redundancy-enhanced frequency-domain repetition scheme to improve reliability. The complex downlink baseband OFDM signal at instant time $t$ within the $t_i$-th time slot can be denoted as 
\begin{equation}  \label{com_signal}
\begin{aligned}
    \mathrm{x}_i^c(t) = &\frac{1}{\sqrt{\mathrm{M}}} \sum_{m=0}^{\mathrm{M}-1}\sum_{n=0}^{\mathrm{N_{cc}-1}} \eta_{m,n} z_{n}^c(t_i)e^{j2\pi m\Delta f(t-t_i)}\text{rect}\Big(\frac{t-t_i}{\mathrm{T_{sym}}}\Big),
\end{aligned}
\end{equation}
where $z_n^c(t_i)$ is the $n$-th complex C\&C symbol at $t_i$ and $\mathbb{E}\{|z_n^c(t_i)|^2\}=1$, $\eta_{m,n} \in \{0,1\}$ denotes whether mapping the $n$-th C\&C symbol on the $m$-th subcarrier. Specifically, each subcarrier $m$ is assigned to at most one symbol $z_n^c(t_i)$, i.e., for each $m$, there exists at most one $n$ such that $\eta_{m,n}=1$. Meanwhile, since $\mathrm{M} \gg \mathrm{N_{cc}}$, each C\&C symbol is mapped onto multiple subcarriers, i.e., $\sum_{m=0}^{\mathrm{M}-1} \eta_{m,n} \gg 1$, which realizes frequency-domain repetition. This design provides frequency diversity and improves robustness against noise and channel impairments. The complex baseband transmit signal at $t_i$ is expressed as 
\begin{equation}
    \mathrm{x}_i^{\text{tx}}(t) = \delta_s(t_i)\mathrm{x}_i^s(t) + \delta_c(t_i)\mathrm{x}_i^c(t),
\end{equation}
where $\delta_s(t_i)$ and $\delta_c(t_i)$ are binary variables that indicate whether to transmit sensing and communication baseband signals, respectively. For clarity, if the BS transmits both sensing and C\&C signals within one time slot, we denote the combined signal as `ISAC signal'. Here, it is important to note that the generation of C\&C signals depends on the sensing detection results. Without loss of generality, we assume that it takes one time slot for the BS to receive the sensing echo, process the sensing data, and generate the corresponding C\&C signal. The C\&C signal can be transmitted in the subsequent time slot. Thus, $\delta_s(t_i)$ and $\delta_c(t_i)$ are defined as 
\begin{align}
        &\delta_s(t_i) \triangleq \begin{cases}
        1, \quad &\text{transmitting sensing signal at the} \\ 
        &\text{beginning of } t_i, \\
        0, \quad &\text{otherwise},  
    \end{cases}  \\
    &\delta_c(t_i) \triangleq \begin{cases}
        1, \quad &\text{if sensing signal is transmitted at $t_{i-1}$}\\ 
           &\text{and C\&C signal is transmitted at the}  \\
           &\text{beginning of } t_i, \\
        0, \quad &\text{otherwise}.
    \end{cases}
\end{align}

\vspace{-0.35 cm}
\subsection{Sensing Receiver Signal Model}
Practical studies indicate that UAV--BS links are predominantly line-of-sight (LoS) in open-air environments \cite{LOS}. Accordingly, a LoS-dominant sensing channel is adopted\footnote{For more complex fading environments (e.g., Rayleigh or probabilistic LoS channels), the sensing model can be extended to account for multipath components with random channel gains and phases, leading to increased estimation uncertainty. The proposed GOSC framework naturally accommodates this effect, as sensing uncertainty is explicitly modeled (e.g., via the KF covariance) and propagated to decision-making through the Mahalanobis distance and VoI design. Hence, channel fading primarily manifests as degraded estimation accuracy without requiring structural changes to the GOSC framework.}. LoS blockage is not considered in this work, as the UAV and dynamic obstacles are assumed to lie on the same horizontal plane (i.e., at similar altitudes), which prevents obstruction of the direct propagation path. Clutter and multipath effects are mitigated by the BS's directional beamforming toward the sky and the relatively small radar cross sections (RCS) of aerial objects.
When the BS transmits a sensing signal at the beginning of the $t_i$-th time slot, the signal is reflected by the UAV and its surrounding dynamic obstacles, and received by the BS within the same slot. 
After signal sampling and applying an $\mathrm{M}$-point fast Fourier transform (FFT), the received sensing echo at the BS in the frequency domain can be expressed as \cite{sensing_receiver} 
\begin{equation}  \label{sensing_echo}
\begin{aligned}
    \bm{Y}_s(t_i)=&\sum_{l=1}^{L_s(t_i)}\sqrt{\frac{\mathrm{P}}{\mathrm{M}} }{}\xi_l(t_i)\bm{a}_{\text{rx}}(\theta_l(t_i))\bm{a}_{\text{tx}}^\mathrm{H}(\theta_l(t_i))\bm{f}(t_i) \cdot \\
    &\big[\bm{\mathrm{x}}_{s}\odot\bm{\mu}(\tau_s^l(t_i))\big]^\top  
    + \bm{Z},
\end{aligned}
\end{equation}
where $\mathrm{P}$ is the transmit power; 
$L_s(t_i)$ is the total number sensing targets, including the UAV and obstacles within the scan scope of the BS;
$\xi_l(t_i) = \sqrt{\frac{\sigma_{\text{RCS}}^l\lambda_c^2}{(4\pi)^3r_l^4(t_i)}}$ represents the attenuation coefficient, $\sigma_{\text{RCS}}^l$ is the RCS of the $l$-th sensing target, $r_l(t_i)$ is the Euclidean distance from the the $l$-th sensing target to the BS at $t_i$; 
$\bm{f}(t_i) \in \mathbb{C}^{\mathrm{K}\times 1}$ is the beamforming vector and discussed in \textit{Subsection F};
$\bm{\mathrm{x}}_{s} =[z^s_0,z^s_1,...,z^s_\mathrm{M-1}]^\top$;
$\bm{\mu}(\tau_s(t_i)) \in \mathbb{C}^{\mathrm{M} \times 1}$ is the phase shift across OFDM subcarriers, in which each element can be expressed as $[\bm{\mu}(\tau_s^l(t_i))]_m = e^{-j2\pi m\Delta f \tau_s^l(t_i)}$, $\tau_s^l(t_i) = \frac{2r_l(t_i)}{\mathrm{c_0}}$ is the round trip delay;
$\bm{Z} \in \mathbb{C}^{\mathrm{K} \times \mathrm{M}}$ is the additive noise and each entry $[\bm{Z}]_{k,m}\sim \mathcal{CN}(0, \Delta f\sigma_0^2)$, $\sigma_0^2$ is the power spectral density of additive white Gaussian noise (AWGN); 
$\bm{a}_{\text{rx}}(\theta_l(t_i)) \in \mathbb{C}^{\mathrm{K} \times 1}$ and $\bm{a}_{\text{tx}}(\theta_l(t_i)) \in \mathbb{C}^{\mathrm{K} \times 1}$ are steering vectors of receiver and transmitter, respectively, which are denoted as
\begin{equation}
\begin{aligned}
    \bm{a}_{\text{rx}}(\theta_l(t_i))&=\bm{a}_{\text{tx}}(\theta_l(t_i)) \\
    &= \bigg[1, e^{-j2\pi\frac{\mathrm{D}\sin(\theta_l(t_i))}{\lambda_c}}, ..., e^{-j2\pi\frac{(\mathrm{K}-1)\mathrm{D}\sin(\theta_l(t_i))}{\lambda_c}}\bigg]^\top.
\end{aligned}
\end{equation}

It is worth mentioning that, within a single OFDM symbol duration, the Doppler effect manifests as an intra-symbol phase rotation. Although perfect Doppler compensation may not be achievable in practice, the residual phase rotation is negligible and is not included in \eqref{sensing_echo}. 
Since the BS has access to the complex baseband signal $\bm{\mathrm{x}}_{s}$ in advance, we can remove its impact on the sensing receiver signal $\bm{Y}_s(t_i)$ via zero-forcing reciprocal filtering \cite{reciprocal}. The filtered signal is written as  
\begin{equation}
\begin{aligned}
    \widetilde{\bm{Y}}_s(t_i)=&\sum_{l=1}^{L_s(t_i)}\sqrt{\frac{\mathrm{P}}{\mathrm{M}}} \xi_l(t_i)\bm{a}_{\text{rx}}(\theta_l(t_i))\bm{a}_{\text{tx}}^\mathrm{H}(\theta_l(t_i)) \cdot \\
    &\bm{f}(t_i) \bm{\mu}(\tau_s^l(t_i))+ \widetilde{\bm{Z}},
\end{aligned}
\end{equation}
where $\widetilde{\bm{Z}}$ denotes the AWGN noise components after filtering. After obtaining $\widetilde{\bm{Y}}_s(t_i)$, we employ the multiple signal classification (MUSIC) algorithm \cite{MUSIC} and the parabolic interpolation of FFT (PIFFT) method \cite{PIFFT} to estimate the angle-of-arrival (AoA) $\hat{\theta}(t_i)$ and the distance $\hat{r}(t_i)$ of the UAV, respectively. Note that the primary BS is mainly responsible for estimating the position of the UAV. The positions and velocities of obstacles are assumed to be provided by cooperative BSs. Moreover, to ensure safe operation, a minimum safety distance $\mathrm{D}_{\text{safe}}$ is maintained between the UAV and surrounding obstacles. Consequently, the range and angular separations between the UAV and obstacles exceed the range resolution $(c / (2\mathrm{M}\Delta f))$ and angular resolution ($2/\mathrm{K}$) of the primary BS, respectively, such that their echoes are resolvable in the range-angle domain. Moreover, the obstacle positions are identified by cooperative BSs and can be filtered out during MUSIC and PIFFT processing. Therefore, obstacle echoes do not interfere with UAV parameter estimation. The detection variance for the AoA of the UAV $\hat{\theta}(t_i)$ can be expressed as \cite{MUSIC_dev}
\begin{equation} \label{aoa_var}
    \sigma^2(\theta(t_i)) \approx \frac{6}{\mathrm{SNR_s}(t_i)\pi^2\cos^2(\theta(t_i))\mathrm{K}^3},
\end{equation}
where $\mathrm{SNR_s}(t_i)$ is the sensing signal-to-noise ratio (SNR) at $t_i$ and is denoted as 
\begin{equation} \label{snr_sense}
    \mathrm{SNR_s}(t_i)= \frac{\mathrm{P}|\xi(t_i)|^2|\bm{a}_{\text{tx}}^\mathrm{H}(\theta(t_i))\bm{f}(t_i)|^2}{\mathrm{M}\Delta f\sigma^2_0 }.
\end{equation}

The detection variance for the distance of UAV $\hat{r}(t_i)$ is given as \cite{PIFFT_dev}
\begin{equation} \label{range_var}
    \sigma^2(r(t_i)) \approx  \bigg(\frac{\mathrm{c_0}}{2\mathrm
    {M}\Delta f}\bigg)^2\frac{1}{16\pi^2\mathrm{SNR_s}(t_i)}.
\end{equation}

The relationship between the real AoA $\theta(t_i)$, distance $r(t_i)$ and their detected AoA $\hat{\theta}(t_i)$, distance $\hat{r}(t_i)$ are denoted as 
\begin{equation}
    \begin{cases}
    \hat{\theta}(t_i) = \theta(t_i)  + \epsilon_{\theta}(t_i), \\
    \hat{r}(t_i) = r(t_i) + \epsilon_r(t_i), 
    \end{cases}
\end{equation}
where $\epsilon_{\theta}(t_i)\sim\mathcal{N}(0, \sigma^2(\theta(t_i)))$ and $\epsilon_r(t_i)\sim\mathcal{N}(0, \sigma^2(r(t_i)))$ are detection errors of AoA and distance, respectively. The real, detection positions of UAV in 2D Cartesian coordination and their relationship are represented as
\begin{equation}  \label{est_real}
\begin{cases}
        \bm{p}(t_i) = \Big[r(t_i)\cos(\theta(t_i)),~r(t_i)\sin(\theta(t_i))\Big]^\top, \\
        \hat{\bm{p}}(t_i) = \Big[\hat{r}(t_i)\cos(\hat{\theta}(t_i)),~\hat{r}(t_i)\sin(\hat{\theta}(t_i))\Big]^\top, \\
        \hat{\bm{p}}(t_i) = \bm{p}(t_i)  + [\epsilon_x(t_i), \epsilon_y(t_i)]^\top.
\end{cases}
\end{equation} 

\begin{lemma}  \label{est_error}
    The detection errors $\epsilon_x(t_i)$ and $\epsilon_y(t_i)$ in 2D Cartesian coordination can be derived as
    \begin{equation}
        \begin{bmatrix}
            \epsilon_x(t_i) \\
            \epsilon_y(t_i)
        \end{bmatrix} \approx \bm{J}(t_i)
            \begin{bmatrix}
            \epsilon_r(t_i) \\
            \epsilon_{\theta}(t_i)
        \end{bmatrix},
    \end{equation}
    where $\bm{J}(t_i)$ is the Jacobian matrix of $\bm{p}(t_i)$ evaluated at $(\hat{r}(t_i),\hat{\theta}(t_i))$.
\end{lemma}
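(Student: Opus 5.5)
The approach is a first-order Taylor linearization of the polar-to-Cartesian map $\bm{g}(r,\theta) = [r\cos\theta,~r\sin\theta]^\top$, which by \eqref{est_real} gives both $\bm{p}(t_i)=\bm{g}(r(t_i),\theta(t_i))$ and $\hat{\bm{p}}(t_i)=\bm{g}(\hat{r}(t_i),\hat{\theta}(t_i))$. Since the true $(r,\theta)$ is unknown to the BS while $(\hat r,\hat\theta)$ is available, I will expand around the detected point rather than the true one. Writing $r = \hat r - \epsilon_r$ and $\theta = \hat\theta - \epsilon_\theta$, the first-order Taylor expansion of $\bm{g}$ about $(\hat r,\hat\theta)$ gives
\begin{equation*}
\bm{p}(t_i) = \bm{g}(\hat r - \epsilon_r,\hat\theta-\epsilon_\theta) = \hat{\bm{p}}(t_i) - \bm{J}(t_i)\begin{bmatrix}\epsilon_r(t_i)\\ \epsilon_\theta(t_i)\end{bmatrix} + \bm{R}_2,
\end{equation*}
where
\begin{equation*}
\bm{J}(t_i)=\begin{bmatrix}\cos\hat\theta(t_i) & -\hat r(t_i)\sin\hat\theta(t_i)\\ \sin\hat\theta(t_i) & \hat r(t_i)\cos\hat\theta(t_i)\end{bmatrix},
\end{equation*}
and $\bm{R}_2$ is the second-order remainder. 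Rearranging and comparing with the third line of \eqref{est_real} yields $[\epsilon_x,\epsilon_y]^\top = \hat{\bm p}-\bm p \approx \bm{J}(t_i)[\epsilon_r,\epsilon_\theta]^\top$, which is the claimed relation.

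The step that needs justification is discarding $\bm{R}_2$. Its entries involve second derivatives of $\bm g$: the $\partial^2/\partial\theta^2$ terms are of order $\hat r\,\epsilon_\theta^2$, the mixed terms are of order $\epsilon_r\epsilon_\theta$, and $\partial^2/\partial r^2 = 0$. By \eqref{aoa_var} and \eqref{range_var}, the variances scale as $1/(\mathrm{SNR_s}\mathrm{K}^3)$ and $(\mathrm{c_0}/(2\mathrm{M}\Delta f))^2/\mathrm{SNR_s}$. These are small under the high-SNR, large-array, wide-bandwidth regime assumed, so the quadratic terms are of higher order than the linear ones. This is the only genuine approximation in the argument, and I would state it as the condition for ``$\approx$''. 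A byproduct worth noting is that the Jacobian evaluated at the true point differs from the one at the estimated point only by a second-order amount, so either choice gives the same first-order result; the paper's choice of the estimated point is the one the BS can actually compute.

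Finally, I would remark that, since $\epsilon_r$ and $\epsilon_\theta$ are independent zero-mean Gaussians, the lemma implies $[\epsilon_x,\epsilon_y]^\top$ is approximately $\mathcal{N}\big(\bm 0,\bm J\,\mathrm{diag}(\sigma^2(r),\sigma^2(\theta))\,\bm J^\top\big)$. This is presumably how the lemma will feed the KF measurement covariance, but the lemma itself only requires the linearization above.
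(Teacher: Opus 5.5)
Your proposal is correct and matches the paper's proof. Both expand the polar-to-Cartesian map to first order about the detected point $(\hat r,\hat\theta)$, use the same Jacobian, and rearrange with $r-\hat r=-\epsilon_r$ and $\theta-\hat\theta=-\epsilon_\theta$ to obtain $[\epsilon_x,\epsilon_y]^\top\approx\bm{J}[\epsilon_r,\epsilon_\theta]^\top$; your explicit sign bookkeeping and remainder analysis make precise what the paper leaves to ``after resorting'' and Remark 1.
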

\begin{proof}
    Please see Appendix A.
\end{proof}

\textit{Remark 1: The Validity of Lemma 1.} \textit{\textbf{Lemma 1}} is derived based on a first-order Taylor expansion that linearizes the nonlinear transformation. The approximation error introduced by the first-order expansion is of higher order (i.e., second-order terms in $\epsilon_r(t_i)$ and $\epsilon_{\theta}(t_i)$). Therefore, the approximation error scales quadratically with the estimation errors and is negligible when these errors are small \cite{first_oder_error}. This condition is satisfied due to the relatively high SNR in LoS-dominant channel and high sensing resolution. Consequently, the linearization error has a marginal effect on the overall system performance.

According to \textit{\textbf{Lemma 1}}, the detection position variances $\sigma^2_x(t_i)$ and $\sigma^2_y(t_i)$ of the UAV in 2D Cartesian coordination can be denoted as 
\begin{equation}
\begin{aligned}
\sigma^2_x(t_i)=&\cos^2\big(\theta(t_{i})\big)\sigma^2\big(r(t_{i})\big) + r^2(t_{i}) \sin^2\big(\theta(t_{i})\big) \times \\ &\sigma^2\big(\theta(t_{i})\big),\\
\sigma^2_y(t_i)=&\sin^2\big(\theta(t_{i})\big)\sigma^2\big(r(t_{i})\big) + r^2(t_{i})\cos^2\big(\theta(t_{i})\big)\times  \\ &\sigma^2\big(\theta(t_{i})\big). \\
\end{aligned}
\end{equation}

\subsection{Communication Receiver Signal Model}
We consider a rotary-wing UAV equipped with one receive antenna. The wireless channel is modeled as 
\begin{equation} \label{com_channal}
    \bm{h}(t_i) = \sqrt{\beta(t_i)}\bm{g}(t_i,\zeta),
\end{equation}
where $\bm{h}(t_i)\in\mathbb{C}^{\mathrm{K}\times 1}$, $\beta(t_i)$ is the large-scale fading, $\bm{g}(t_i,\zeta)\in\mathbb{C}^{\mathrm{K}\times 1}$ is the small-scale fading. Note that the BS and UAV are synchronized in both time and frequency domains. In practice, this can be achieved via standard OFDM synchronization techniques (e.g., pilot-assisted timing and carrier frequency offset estimation). Given the LoS-dominant channel and relatively short communication distance, residual synchronization errors are small and thus neglected in \eqref{com_channal}. Since the wireless channel between the BS and the UAV is dominated by the LoS link, the free-space path loss model \cite{FSPL} is used to describe the large-scale fading
\begin{equation}
    \beta(t_i) = \bigg(\frac{\lambda_c}{4\pi r(t_i)}\bigg)^2.
\end{equation}

Similarly, due to the existence of LoS link, the small-scale fading is modeled by the Rician fading below \cite{LOS}
\begin{equation} \label{rician}
\begin{aligned}
        \bm{g}(t_i,\zeta) =& \sqrt{\frac{\kappa}{\kappa+1}}\bm{a}_{\text{tx}}(\theta(t_i))\alpha_0\delta(\zeta-\zeta_0) \\ 
        &+ \sqrt{\frac{1}{\kappa+1}}\sum_{q=1}^{Q(t_i)-1}\alpha_q\delta(\zeta-\zeta_q),
\end{aligned}
\end{equation}
where $\kappa$ is the Rician factor, $\alpha_{0}$ is the deterministic LoS channel component with $|\alpha_{0}|=1$, $\zeta_0$ is the delay on LoS channel, $\delta(\cdot)$ is Dirac delta function, $\alpha_{q}$ denotes non-line-of-sight (NLoS) fading component, $\alpha_q\sim \mathcal{CN}(0, \sigma_q^2)$ and $\sum_q\sigma_q^2=1$, $\zeta_q$ is the delay on the $q$-th NLoS channel, $Q(t_i)$ is the total number of transmission paths. After Fourier transform, the channel frequency response of $\bm{g}(t_i,\zeta)$ can be expressed
\begin{equation}
    \begin{aligned}
        \bm{G}(t_i,f) =& \sqrt{\frac{\kappa}{\kappa+1}}\bm{a}_{\text{tx}}(\theta(t_i ))\alpha_0e^{-j2\pi f\zeta_0} \\
        &+ \sqrt{\frac{1}{\kappa+1}}\sum_{q=1}^{Q(t_i)-1}\alpha_qe^{-j2\pi f\zeta_q}.
    \end{aligned}
\end{equation}

As noted in equation \eqref{com_signal}, we assume the C\&C symbols are repeatedly mapped on $\mathrm{L}$ disjoint frequency blocks across $\mathrm{M}$ subcarriers. Denote the set of subcarriers used by the $l$-th repetition as $\mathcal{S}_l$ with cardinality $|\mathcal{S}_l|~(\sum_l|\mathcal{S}_l|\leq\mathrm{M})$. At the receiver, after OFDM demodulation, the received signals corresponding to the same symbol $z_n^c(t_i)$ across different subcarriers are combined using maximum ratio combining (MRC)\cite{MRC} to achieve a high SNR gain. The combined signal is then used for standard QPSK symbol detection. Specifically, the communication SNR between the BS and the UAV at the $t_i$-th time slot can be expressed as
\begin{equation}
    \mathrm{SNR_c}(t_i) = \frac{\mathrm{P}\beta(t_i)}{\mathrm{M}\Delta f\sigma^2_0}\sum_{l=1}^{\mathrm{L}}\sum_{f'\in \mathcal{S}_l}\Big|\bm{G}^\top(t_i,f')\bm{f}(t_i)\Big|^2.
\end{equation}

The latency for transmitting the C\&C data is denoted as 
\begin{equation}
\begin{aligned}
    \tau_c(t_i) = \frac{\mathrm{2LN_{cc}}}{\mathrm{M}\Delta f\log_2\big(1 + \mathrm{SNR_c}(t_i)\big)}.
\end{aligned}
\end{equation}

Given that the C\&C data size is relatively small and the LoS-dominant channel provides high-SNR conditions, the transmission latency $\tau_c(t_i)$ is consistently lower than the time slot duration $\Delta t$. Moreover, frequency-domain repetition combined with MRC ensures guarantees reliable decoding at the UAV. For analytical tractability, we assume the UAV possesses sufficient computational resources to process the signal upon reception. Consequently, C\&C signals are assumed to be successfully decoded within the same time slot in which they are transmitted.

\subsection{UAV Kinematic Model}
At every time slot, the forward angle $\phi(t_i)$ and speed $V(t_i)$ of the UAV are controlled by the C\&C signal. Specifically, if the C\&C signal is transmitted at the $t_i$-th time slot, the UAV executes its corresponding command $\phi(t_i)$ and $V(t_i)$; otherwise, the UAV adopts the command from the last time slot $\phi(t_{i-1})$ and $V(t_{i-1})$, which can be written as
\begin{equation} \label{speed}
    \begin{cases}
        V(t_i) = \delta_c(t_i)V(t_i) + (1-\delta_c(t_i))V(t_{i-1}),  \\
        \phi(t_i) = \delta_c(t_i)\phi(t_i) + (1-\delta_c(t_i))\phi(t_{i-1}). \\
    \end{cases}
\end{equation}

Due to propulsion limitations of the UAV, the maximum changes in angle and speed within one time slot are $\Delta \phi$ and $\Delta \mathrm{V}$, respectively; the maximum flying speed can not exceed $\mathrm{V}_{\max}$.  Thus, the commanded values should satisfy
\begin{equation}  \label{cc_cons}
    \begin{cases}
        \big|V(t_i) - V(t_{i-1})\big| \leq \Delta \mathrm{V},~V(t_i)\in [0,\mathrm{V}_{\max}], \\
        \big|\phi(t_i) -\phi(t_{i-1})\big| \leq \Delta \phi,~\phi(t_i) \in [0, 2\pi]. 
    \end{cases}
\end{equation}

Based on the C\&C signal, the kinematic model of the UAV can be given as
\begin{equation} \label{kine}
    \begin{cases}
        p_x(t_{i+1}) =  p_x(t_i) + V(t_{i-1})\cos(\phi(t_{i-1}))\tau_c(t_i) +\\~~~~~~~~~~~~~V(t_i)\cos(\phi(t_i))(\Delta t - \tau_c(t_i)) + \eta_x(t_i) , \\
        p_y(t_{i+1}) =  p_y(t_i) + V(t_{i-1})\sin(\phi(t_{i-1}))\tau_c(t_i)+ \\~~~~~~~~~~~~~V(t_i)\sin(\phi(t_i))(\Delta t - \tau_c(t_i)) + \eta_y(t_i),
    \end{cases}
\end{equation}
where $\eta_x(t_i) ,\eta_y(t_i) \sim \mathcal{N}(0, \sigma_\eta^2)$ are Gaussian noise due to environmental disturbance or system imperfection.

\subsection{Beamforming Model}
The beampattern synthesis approach in \cite{BF} is adopted to design the beamforming vector $\bm{f}(t_i)$ for sensing and communication. Specifically, we construct an uniform angular grid covering $[-\frac{\pi}{2}, \frac{\pi}{2}]$, which consists of $\mathrm{N}_{\vartheta}$ discrete grid points denoted by $\{\vartheta_u\}_{u=1}^{\mathrm{N}_{\vartheta}}$.
Given the detected AoA $\hat{\theta}(t_i)$ and variance $\sigma^2(\hat{{\theta}}(t_i))$, we define the scope of the direction from the BS towards the UAV at time $t_i$ as $\bm{\vartheta}_{\text{CI}}(t_i) = [\vartheta_{\min}(t_i), \vartheta_{\max}(t_i)]$, where the boundary angles are given as
\begin{equation}
    \begin{cases}
        \vartheta_{\min}(t_i) = \hat{\theta}(t_i) - \mathrm{B}\sigma(\hat{\theta}(t_i)), \\
        \vartheta_{\max}(t_i) = \hat{\theta}(t_i) + \mathrm{B}\sigma(\hat{\theta}(t_i)),
    \end{cases}
\end{equation}
in which $\mathrm{B}$ is the confidence level factor. Let $\bm{b}_D(t_i) \in \mathbb{C}^{\mathrm{N}_{\vartheta}\times 1}$ denote the desired beam pattern over the angular grid. Its entries are defined as
\begin{equation}
    [\bm{b}_D(t_i)]_u \triangleq \begin{cases}
        \mathrm{K}, \quad \text{if } \vartheta_u \in  \bm{\vartheta}_{\mathrm{CI}}(t_i), \\
        0, \quad \text{otherwise}.
    \end{cases}
\end{equation}

The beam pattern synthesis problem can be formulated as 
\begin{equation}
    \min_{\bm{f}(t_i)} \big\Vert \mathbf{b}(t_i) - \bm{A}^\top_{\text{tx}}\bm{f}(t_i)\big\Vert^2
\end{equation}
where $\bm{A}_{\text{tx}} = [\bm a_{\text{tx}}(\vartheta_1),...,\bm a_{\text{tx}}(\vartheta_{\mathrm{N}_{\vartheta}}) ] \in \mathbb{C}^{\mathrm{K}\times \mathrm{N}_{\vartheta}}$ denotes the transmit steering matrix evaluated at the grid points. This least squares problem has a closed-form solution
\begin{equation}
    \bm{f}(t_i) = (\bm A_{\text{tx}}^*\bm A_{\text{tx}}^\top)^{-1}\bm A_{\text{tx}}^*\bm{b}_D(t_i).
\end{equation}

\subsection{Obstacle Kinematic Model and Detection Model}
Dynamic obstacles are randomly distributed along the flight route of the UAV. The kinematic model of any obstacle can be denoted as \cite{obstacle_dynamics}
\begin{equation}
    \begin{cases}
        p_{ox}(t_{i+1}) =  p_{ox}(t_i) + V_{ox}(t_{i})\Delta t , \\
        p_{oy}(t_{i+1}) =  p_{oy}(t_i) + V_{oy}(t_{i})\Delta t,
    \end{cases}
\end{equation}
where $V_{ox}(t_i)$, $V_{oy}(t_i)$ are the obstacle velocities along the $x$- and $y$-axes, respectively, which are not known in advance. Due to the uncertainty of obstacle locations and movements, the cooperative BSs need to detect obstacle positions at each time slot. The detailed detection process is not the focus of this work. Instead, we leverage the detected obstacle positions and their uncertainties at each time slot. For safety, the cooperative BSs scan the surrounding environment within a circular region centered at the UAV \cite{scan_range}. Accordingly, the set of obstacles detected at time $t_i$ can be expressed as
\begin{equation}
    \hat{\mathcal{O}}(t_i) = \big\{o~\big|~\Vert \bm{p}_o(t_i) -\hat{\bm{p}}(t_i)\Vert \le \mathrm{R}_{\text{scan}} \big\},
\end{equation}
in which $\mathrm{R}_{\text{scan}}$ is the scanning radius. Let $\epsilon_{ox}(t_i)$ and $\epsilon_{oy}(t_i)$ denote the detection errors of obstacles along the $x$- and $y$-axes at time $t_i$. These errors are modeled as Gaussian random variables
\begin{equation}
    \epsilon_{ox}(t_i), \; \epsilon_{oy}(t_i) \sim \mathcal{N}(0, \sigma_o^2),
\end{equation}
where $\sigma_o^2$ is the variance of the obstacle detection noise.

\vspace{-0.2 cm}
\subsection{Process Overview and Problem Formulation}
The overall process of the obstacle avoidance task is described as follows. At the beginning of the task, the BS is assumed to have prior knowledge of the UAV’s real initial position $\bm{p}(t_0)$ and its real destination position $\bm{p}_{\text{dst}}$. During subsequent time slots, the BS determines whether to transmit sensing and/or C\&C signals based on the detected UAV position $\hat{\bm{p}}(t_i)$ and its associated detection variances $[\sigma^2_x(t_i),~\sigma^2_y(t_i)]^\top$, as well as the detected positions of dynamic obstacles $\hat{\bm{p}}_o(t_i)$ and their corresponding variances $[\sigma^2_{o},~\sigma^2_{o}]^\top$, where $o \in \hat{\mathcal{O}}(t_i)$. For ease of understanding, suppose the BS transmits a sensing signal at the $t_i$-th time slot, from which the UAV’s detected position $\hat{\bm{p}}(t_i)$ and the detection variance $[\sigma^2_x(t_i),~\sigma^2_y(t_i)]^\top$ can be obtained. Let us assume there are dynamic obstacles around the UAV at $t_i$, and their detected positions and variances are $\hat{\bm{p}}_o(t_i)$ and $[\sigma^2_{o},~\sigma^2_{o}]^\top$, respectively. Based on the detected positions and variances of the UAV and obstacles, the BS decides whether to transmit a C\&C signal and a new sensing signal at the next time slot $t_{i+1}$. This process is iteratively executed until the UAV successfully completes its task.

Our objectives are twofold: 1) to enable the UAV to complete the task as quickly as possible, and 2) to reduce the number of transmitted sensing and C\&C signals. Therefore, the objective function is denoted as 

\begin{align}
    \mathcal{P}1.1:&~\mathop{\rm min}_{\substack{\delta_c(t_i),\delta_s(t_i) \\V(t_i),\phi(t_i) }} ~ t_I   \label{eq1}    \\
       \textrm{s. t.}&~\big\Vert\bm{p}(t_i)  - \bm{p}_o(t_i)\big\Vert > \mathrm{D}_{\text{safe}}, \quad \forall t_i,~o, \tag{\ref{eq1}{a}}   \label{eq1a} \\
      &~\big\Vert\bm{p}(t_I) - \bm{p}_{\text{dst}}\big\Vert \leq \mathrm{D}_{\text{thr}}, \tag{\ref{eq1}{b}}   \label{eq1b} \\
    \mathcal{P}1.2:&~\mathop{\rm min}_{\substack{\delta_c(t_i),\delta_s(t_i) \\V(t_i),\phi(t_i) }} \sum_{i=0}^{I} \Big(\delta_c(t_i) + \delta_s(t_i) \Big)  \label{eq2}  \\
       \textrm{s. t.}&~\eqref{eq1a},~\eqref{eq1b}  \notag,
\end{align}
where $t_I$ in \eqref{eq1} denotes the time slot index for the UAV to reach its destination, $\sum_{i=0}^{I} (\delta_c(t_i) + \delta_s(t_i))$ in \eqref{eq2} represents the total number of transmitted sensing and C\&C signals within $t_I$, $\mathrm{D}_{\text{safe}}$ is the safety distance for the UAV to avoid collision, $\mathrm{D}_{\text{thr}}$ is the threshold distance for UAV to reach its destination.

\vspace{-0.1 cm}
\section{Traditional Framework}  \label{trad}
In traditional research, obstacle avoidance and ISAC signal transmission have been studied as separate disciplines. Obstacle avoidance is primarily explored in the field of mobile robotics, whereas ISAC signal transmission is investigated in wireless communications. In existing ISAC research, sensing and communication signals are typically directed towards different targets, and both signals are transmitted continuously \cite{continuous}. 
Robotic obstacle avoidance research commonly employs the dynamic window approach (DWA) \cite{DWA} in highly dynamic environments, which can guide the robot to avoid dynamic obstacles while moving towards its destination. The basic principle of DWA is to sample the robot’s linear and angular velocities within a feasible range constrained by its kinematics. Each velocity pair is evaluated using a cost function that accounts for distances to obstacles and the destination, and the optimal pair with the minimum cost is then selected. However, the DWA relies on the Euclidean norm to calculate distances, which leads to potential collisions in the presence of position detection errors. A typical solution is to introduce an inflation radius around the obstacle to account for such uncertainties \cite{inflation}. As a result, the distance is calculated using both the Euclidean norm and the inflation radius. 

\begin{figure}[t] 
    \centering
    \includegraphics[width=0.3\textwidth]{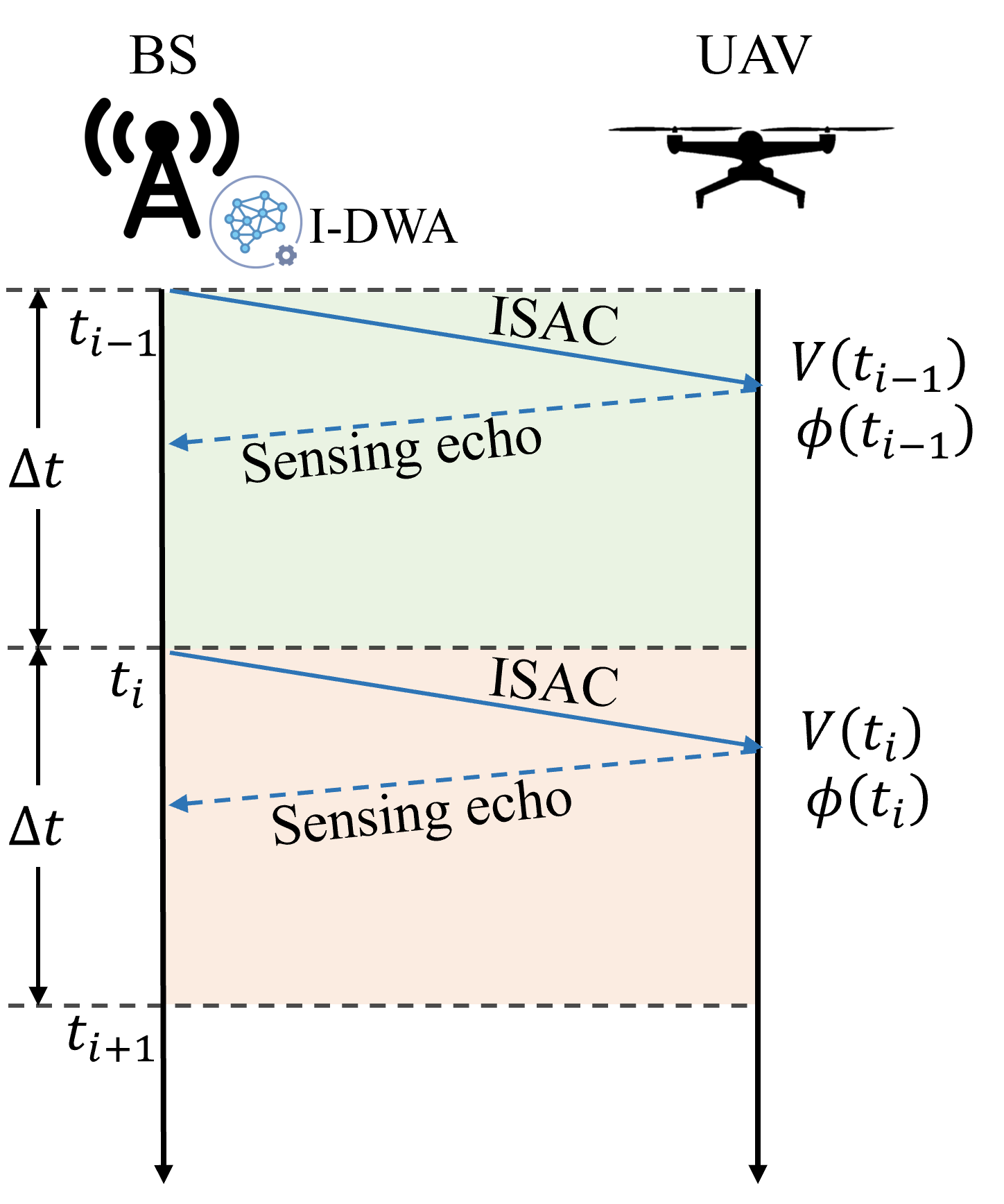}
    \caption{Traditional signal transmission framework.}
    \label{trad_framework}
\end{figure}

\figref{trad_framework} illustrates an example of the signal transmission under the traditional obstacle avoidance method. Specifically, at the beginning of the $t_i$-th time slot, the BS transmits an ISAC signal to the UAV, in which the C\&C values are generated by the inflation based-DWA (I-DWA) using the detection positions of UAV and obstacles together with their variances at the $t_{i-1}$-th time slot. The sensing signal is reflected by the UAV and received by the BS. The C\&C signal is received by the UAV, and then the UAV updates its motion state and moves at speed $V(t_i)$ and heading angle $\phi(t_i)$. 

It is worth noting that, to ensure the safe operation of robotic systems, C\&C signals are typically transmitted at very high frequencies \cite{wu}. Since C\&C signals strongly depend on the sensed positions, the ISAC signals in our considered task must also be transmitted at such high frequencies, resulting in a large number of redundant C\&C transmissions. For example, when there are no obstacles around the UAV, the BS may not need to transmit new sensing or C\&C signals, and the UAV can continue executing the previously received C\&C signal for several time slots despite random environmental influence. This motivates the development of a more efficient strategy that transmits sensing and/or C\&C signals only when they are significantly beneficial to the goal of the task.

\section{GOSC Framework}  \label{proposed}
In this section, we present our proposed GOSC framework for ISAC-enabled robotic obstacle avoidance task. The framework comprises three main components that constitute a closed loop for sensing, C\&C generation, and transmission for each time slot: a Kalman filter (KF) to mitigate the dependence of UAV position estimation on sensing signal transmissions, a Mahalanobis distance-based dynamic window approach (MD-DWA) to generate accurate C\&C signals under uncertainty, and an effectiveness-aware deep Q-network (E-DQN) to ensure that sensing and C\&C signals are transmitted only when they provide sufficient benefit to the task. These three components are introduced in detail in the subsequent subsections, followed by an overall description of the signal transmission process within the GOSC framework. For ease of distinction, the UAV's position detected from the sensing signal is denoted by $\hat{\bm p}$, the predicted position by the KF is denoted as $\mathring{\bm p}$, and the final estimation position output from the KF is represented as $\widetilde{\bm p}$.

\subsection{Kalman Filter}
We propose a KF that consists of two main functions: 1) predicting the position of the UAV at every time slot, and 2) reducing uncertainty of the sensing detected position. The first function provides a reference position for the UAV even when no signals are transmitted, such that the BS does not need to transmit sensing signals all the time. While the second function refines the detected position to achieve higher accuracy. To begin, we introduce the process transition model of the KF
\begin{equation} \label{pre_ekf}
    \acute{\bm{p}}(t_i) = \acute{\bm{p}}(t_{i-1}) + \bm{D}\bm{v}(t_{i-1}) + \bm{\gamma}(t_i),
\end{equation}
where $\acute{\bm{p}}(t_i)$ is the true position of UAV at $t_i$ estimated by the KF,  
$\bm{v}(t_{i-1})=[V(t_{i-1})\cos(\phi(t_{i-1})),V(t_{i-1})\sin(\phi(t_{i-1}))]^\top$ represents the velocity along 2D Cartesian coordinate axes at the $t_{i-1}$-th time slot,  $\bm{D}\in \mathbb{R}^{2\times2}$ is a matrix indicating the duration of speed $\bm{v}(t_{i-1})$, and $\bm{\gamma}=[\eta_x(t_i), \eta_y(t_i)]^\top$ is the process noise. Due to the LoS-dominant channel and the relatively short transmission distance, the transmission delay $\tau_c(t_{i-1})$ of the C\&C signal is small compared to the time slot duration $\Delta t$. Moreover, since C\&C signals are not transmitted at every time slot and the BS cannot precisely measure $\tau_c(t_{i-1})$, the impact of such delay is incorporated into the process noise. Accordingly, we set $\bm{D}=\bm{I}_2\Delta t$.
By \eqref{pre_ekf}, the KF can predict the UAV's position and its covariance matrix at every time slot. Specifically, the predicted position of the UAV at $t_{i}$ can be denoted as 
\begin{equation}  \label{ekf_predict}
    \mathring{\bm{p}}(t_{i}) = \mathring{\bm{p}}(t_{i-1}) + \bm{D}\bm{v}(t_{i-1}).
\end{equation}

The covariance matrix $\mathring{\bm{\Gamma}}(t_{i})\in \mathbb{R}^{2\times2}$ of the predicted position $\mathring{\bm{p}}(t_{i})$ is represented as 
\begin{equation}  \label{cm_pre}
    \mathring{\bm{\Gamma}}(t_{i}) = \mathring{\bm{\Gamma}}(t_{i-1}) + \bm{Q},
\end{equation}
where $\mathring{\bm{\Gamma}}(t_0)=\bm{0}_{\mathrm{2}}$ is the initial position of the UAV that is known at the BS, $\bm{Q}=\bm{I}_2 \sigma^2_\eta$ is the covariance matrix of process noise. It can be found that although we can obtain a prediction position of the UAV at every time slot, the accumulated error increases if no sensing signals are transmitted over time, which in turn degrades the reliability of obstacle avoidance and may lead to collision. If the BS decides to transmit a sensing signal at the $t_{i}$-th time slot, according to equation \eqref{est_real}, the sensing position is represented as $\hat{\bm{p}}(t_{i})= \bm{p}(t_{i}) + [\epsilon_x(t_{i}), \epsilon_y(t_{i})]^\top$. By combining the prediction and sensing uncertainties, the Kalman gain is obtained as
\begin{equation}
    \mathbf{G}(t_{i}) = \mathring{\bm{\Gamma}}^\top(t_{i})\Big[\mathring{\bm{\Gamma}}(t_{i}) + \hat{\bm{\Gamma}}(t_{i}) \Big]^{-1},
\end{equation}
where $\hat{\bm{\Gamma}}(t_{i}) \in \mathbb{R}^{2\times 2}$ is the covariance matrix of $\hat{\bm{p}}(t_{i})$ and is denoted as
\begin{equation}
\begin{aligned}
\hat{\bm{\Gamma}}(t_{i}) = \begin{bmatrix}
    \sigma^2_x(t_i), & 0 \\
    0, & \sigma^2_y(t_i)
\end{bmatrix}.
\end{aligned}
\end{equation}

The Kalman gain $\mathbf{G}(t_{i})$ serves as a weighting factor to balance the prediction position $\mathring{\bm{p}}(t_{i})$ and the sensing estimation position $\hat{\bm{p}}(t_{i})$. Such that the KF can provide a more accurate position estimation and reduce the overall estimation uncertainty. The refined estimation position and its covariance matrix are given by
\begin{equation} \label{refine_pos}
\begin{cases}
    \widetilde{\bm{p}}(t_{i})=\mathring{\bm{p}}(t_{i}) + \bm{G}(t_{i})\big(\hat{\bm{p}}(t_{i})-\mathring{\bm{p}}(t_{i})\big), \\
    \widetilde{\bm{\Gamma}}(t_{i})=\big(\bm{I}_2-\bm{G}(t_{i})\big)\mathring{\bm{\Gamma}}(t_{i}).
\end{cases}
\end{equation}

To enhance clarity, the workflow of the proposed KF is presented as follows: If no sensing signal is transmitted at $t_i$, the KF outputs $\widetilde{\bm p}(t_i)=\mathring{\bm{p}}(t_i)$ along with the covariance matrix $\widetilde{\bm{\Gamma}}(t_i) = \mathring{\bm{\Gamma}}(t_i)$, which together serves as the estimated information of the UAV. If a sensing signal is transmitted at $t_i$, the KF provides the refined position $\widetilde{\bm{p}}(t_i)$ and the covariance matrix $\widetilde{\bm{\Gamma}}(t_i)$ according to \eqref{refine_pos}. At the same time, the prediction covariance matrix $\mathring{\bm{\Gamma}}(t_i)$ is updated as $\widetilde{\bm{\Gamma}}(t_i)$. If a C\&C signal is transmitted at $t_{i+1}$, the velocity vector $\bm{v}(t_{i-1})$ in \eqref{ekf_predict} is updated according to equation \eqref{speed}.

\subsection{Mahalanobis Distance-based DWA}
Based on $\widetilde{\bm p}(t_i)$ and ${\widetilde{\bm\Gamma}}(t_i)$, as well as the detected obstacle positions $\hat{\bm{p}}_o(t_i)$ and their corresponding variances $[\sigma_o^2,~\sigma_o^2]^\top$, the BS generates C\&C signals to guide the UAV avoiding obstacles while moving towards its destination. Traditional inflation-based C\&C methods are overly conservative: they ignore the directional characteristics of uncertainty and often overestimate obstacle regions, which reduces navigation efficiency.
To overcome these limitations, we design a MD-DWA method to generate C\&C signals. Specifically, the Mahalanobis distance \cite{mahala} between the UAV and the detected obstacle $o\in\hat{\mathcal{O}}(t_i)$ is defined as 
\begin{equation}  \label{MD}
\begin{aligned}
    d_o^M&= \|\widetilde{\bm p} -\bm p_o \|_M \\
    &=\sqrt{\big(\widetilde{\bm{p}} -\bm{p}_o\big)^\top\big(\widetilde{\bm{\Gamma}}+\bm{\Phi}_o\big)^{-1}\big(\widetilde{\bm{p}}-\bm{p}_o\big)}, 
\end{aligned}
\end{equation}
where the time slot index is $t_i$ and is omitted due to space limitation, $\bm{\Phi}_o=\bm{I}_2 \sigma^2_o$ is the covariance matrix of the $o$-th detected obstacle. The detailed generation process of C\&C signals based on the MD-DWA method is described as follows. Suppose the BS transmits a sensing signal at $t_i$, after receiving the sensing echo signal, the BS generates a feasible movement velocity and forward angle set. According to equation \eqref{cc_cons}, the feasible set can be denoted as
\begin{equation} \label{omega}
\begin{aligned}
    \bm{\Omega}(t_i)=&\bigg\{\big(\breve{V}(t_i), \breve{\phi}(t_i)\big) \Big|
 \max\{0, V(t_{i-1}) - \Delta \mathrm{V}\} \leq \breve{V}(t_i) 
   \leq \\
&\min \{V(t_{i-1}) + \Delta \mathrm{V},\mathrm{V_{max}}\},\max\{0, \phi(t_{i-1}) - \Delta \phi\}  \\
&\leq \breve{\phi}(t_i) \leq \min \{\phi(t_{i-1}) + \Delta \phi, 2\pi \}\bigg\}.
\end{aligned}
\end{equation}

For each candidate pair $(\breve{V}(t_i), \breve{\phi}(t_i)) \in \bm{\Omega}(t_i)$, a forward prediction trajectory $\bm{\Xi}(t_i)$ is generated over the next $\mathrm{I_p}$ time slots using the KF prediction model \eqref{ekf_predict}, which can be expressed as the discrete sequence
\begin{equation} \label{pre_traj1}
    \bm{\Xi}(t_i) = \Big\{\big[\breve{p}_x(t_{i+b}),~\breve{p}_y(t_{i+b})\big]^\top:~b=1,2,...,\mathrm{I_p}\Big\},
\end{equation}
in which
\begin{equation}
     \begin{bmatrix}
         \breve{p}_x(t_{i+b}) \\
         \breve{p}_y(t_{i+b}) \\
     \end{bmatrix}=\begin{bmatrix}
         \widetilde{p}_x(t_i) + \breve{V}(t_i)\cos(\breve\phi(t_i))b\Delta t \\
         \widetilde{p}_y(t_i) + \breve{V}(t_i)\sin(\breve\phi(t_i))b\Delta t
     \end{bmatrix},
\end{equation}

To evaluate collision risk along the predicted trajectory, the minimum Mahalanobis distance to the detected obstacles is calculated as
\begin{equation}
    d^M_{\min}\big(\breve{V}(t_i), \breve{\phi}(t_i)\big) = \min_{b}\min_{o}  ~d_o^M(t_{i+b}) ,
\end{equation}
in which $\bm{p}_o(t_{i+b})=\bm{p}_o(t_i)$ is constant over the horizon since obstacle motions are unknown and need to be detected by cooperative BSs,  $\bm{\Phi}_o(t_{i+b})=\bm{\Phi}_o(t_{i})+b\Delta t\bm{I}_2$. Let $b^\star$ and $o^\star$ denote the prediction step and obstacle yielding this minimum Mahalanobis distance, respectively.

\begin{lemma}
    To avoid collision with obstacles, the minimum Mahalanobis distance must satisfy
    \begin{equation}  \label{md_cons}
        d^M_{\min}\big(\breve{V}(t_i), \breve{\phi}(t_i)\big) \geq \sqrt{\chi^2_{2,0.99}}  +  \frac{\mathrm{D}_{\mathrm{safe}}}{\sqrt{\lambda_{\min}(\bm{\Sigma}(t_i))}},
    \end{equation}
    where $\chi^2_{2,0.99}$ is the the $99\%$ confidence threshold of a chi-squared ($\chi^2$) distribution with two degrees of freedom, $\bm{\Sigma}(t_i)= \bm{\Phi}_{o^\star}(t_i) + \mathring{\bm{\Gamma}}(t_{i+b^\star})$ is the covariance matrix of $d^M_{\min}(\breve{V}(t_i), \breve{\phi}(t_i))$, $\lambda_{\min}(\bm{\Sigma}(t_i))$ is the minimum eigenvalue of $\bm{\Sigma}(t_i)$.
\end{lemma}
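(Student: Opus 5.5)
We treat the relative position between the UAV and the critical obstacle $o^\star$ at prediction step $b^\star$ as a Gaussian random vector and bound its Euclidean length using a chi-squared confidence region. Define the true relative displacement $\bm{\Delta} = \bm{p}(t_{i+b^\star}) - \bm{p}_{o^\star}(t_{i+b^\star})$ and its estimate $\widetilde{\bm{\Delta}} = \breve{\bm{p}}(t_{i+b^\star}) - \hat{\bm{p}}_{o^\star}(t_i)$. The UAV prediction error has covariance $\mathring{\bm{\Gamma}}(t_{i+b^\star})$, following \eqref{cm_pre}, and the obstacle error has covariance $\bm{\Phi}_{o^\star}$. Because the two errors are independent, the estimation error $\bm{e}=\widetilde{\bm{\Delta}}-\bm{\Delta}$ is approximately $\mathcal{N}(\bm{0},\bm{\Sigma}(t_i))$. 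Here we rely on \textit{\textbf{Lemma 1}} to keep the sensing error Gaussian after linearization. The quadratic form $\bm{e}^\top\bm{\Sigma}^{-1}\bm{e}$ is therefore $\chi^2_2$-distributed, so with probability $0.99$ we have $\|\bm{e}\|_M \le \sqrt{\chi^2_{2,0.99}}$, where $\|\cdot\|_M$ denotes the Mahalanobis norm induced by $\bm{\Sigma}$.

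Next, we apply the triangle inequality in the Mahalanobis norm. Since $\bm{\Sigma}^{-1/2}$ is a linear map, $\|\cdot\|_M$ is a genuine norm, and
\begin{equation*}
\|\bm{\Delta}\|_M \ge \|\widetilde{\bm{\Delta}}\|_M - \|\bm{e}\|_M \ge d^M_{\min} - \sqrt{\chi^2_{2,0.99}},
\end{equation*}
which holds with $99\%$ confidence. To return to Euclidean distance, we use the Rayleigh quotient bound $\bm{\Delta}^\top\bm{\Sigma}^{-1}\bm{\Delta} \le \|\bm{\Delta}\|^2/\lambda_{\min}(\bm{\Sigma})$, which gives $\|\bm{\Delta}\| \ge \sqrt{\lambda_{\min}(\bm{\Sigma})}\,\|\bm{\Delta}\|_M$. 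Requiring $\|\bm{\Delta}\| \ge \mathrm{D}_{\mathrm{safe}}$ is then guaranteed whenever $\sqrt{\lambda_{\min}}\big(d^M_{\min}-\sqrt{\chi^2_{2,0.99}}\big)\ge \mathrm{D}_{\mathrm{safe}}$. Rearranging this condition yields \eqref{md_cons}.

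It remains to extend the bound from the critical pair $(b^\star,o^\star)$ to all steps and obstacles. Every other pair has a Mahalanobis distance at least $d^M_{\min}$. However, the pairs have different covariances, so applying the same argument to them would require each pair's own $\lambda_{\min}$.

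This covariance mismatch is the step I expect to be the main obstacle, and it has two parts. First, the chain above was run in the norm defined by $\bm{\Sigma}$, whereas $d_o^M$ in \eqref{MD} uses $\widetilde{\bm{\Gamma}}+\bm{\Phi}_o$ evaluated along the horizon. I would handle this by identifying the two covariances at the minimizing pair, taking $\widetilde{\bm{\Gamma}}$ propagated to step $b^\star$ to be $\mathring{\bm{\Gamma}}(t_{i+b^\star})$. Second, for the extension to the other pairs, I would argue as the paper evidently intends: the lemma gives a necessary safety condition enforced at the binding pair. I would state explicitly that the guarantee is probabilistic at the $99\%$ level and rests on the Gaussian linearization.
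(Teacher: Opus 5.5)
Your proposal is correct and follows essentially the paper's argument: a $\chi^2_2$ confidence ellipse of Mahalanobis radius $\sqrt{\chi^2_{2,0.99}}$, the triangle inequality in the $\bm{\Sigma}$-induced norm, and the Rayleigh bound $\|\bm{u}\|_M \le \|\bm{u}\|/\sqrt{\lambda_{\min}(\bm{\Sigma})}$. The only cosmetic difference is that the paper phrases safety as the origin lying outside the Minkowski sum $\bm{\mu}+\mathcal{E}_{\varpi}\oplus\mathcal{D}$ and applies the Rayleigh bound to the safety-disk vector $\bm{u}$ rather than to the true displacement $\bm{\Delta}$, which is the same containment seen from the other side. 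Like you, the paper treats only the critical pair $(b^\star,o^\star)$ and implicitly identifies $d^M_{\min}$ with $\|\bm{\mu}\|_M$ under $\bm{\Sigma}$; it explicitly concludes a \emph{sufficient} condition, so \emph{necessary} in your last paragraph should read \emph{sufficient}.
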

\begin{proof}
    Please see Appendix B.
\end{proof}

The feasible candidate set is thus refined to $\bm{\Omega}’(t_i)$, which only includes the candidate pair $(\breve{V}(t_i), \breve{\phi}(t_i))$ satisfying \eqref{md_cons}. 
For each $(\breve{V}(t_i), \breve{\phi}(t_i)) \in \bm{\Omega}'(t_i)$, the Mahalanobis distance between the predicted trajectory endpoint $\breve{\bm{p}}(t_{i+\mathrm{I_p}})$ and the destination $\bm{p}_{\text{dst}}$ is computed as 
\begin{equation} \label{m_des}
\begin{aligned}
        d^M_{\mathrm{des}}\big( &\breve{V}(t_i), \breve{\phi}(t_i)\big)=  \\
&\sqrt{\big(\breve{\bm{p}}(t_{i+\mathrm{I_p}}) -\bm{p}_{\text{dst}}\big)^\top\big(\breve{\bm{\Gamma}}(t_{i+\mathrm{I_p}})\big)^{-1}\big(\breve{\bm{p}}(t_{i+\mathrm{I_p}}) -\bm{p}_{\text{dst}}\big)}.
\end{aligned}
\end{equation}

To realize obstacle avoidance and destination approaching related to $\mathcal{P}1.1$, an evaluation function is defined as 
\begin{equation}  \label{evalu}
    E\big(\breve{V}(t_i), \breve{\phi}(t_i)\big)= \frac{d^M_{\text{dst}}\big( \breve{V}(t_i), \breve{\phi}(t_i)\big)}{d^{M^\star}_{\text{dst}}} + \frac{d^{M^\star}_{\min}}{d^M_{\min}\big( \breve{V}(t_i), \breve{\phi}(t_i)\big)}, 
\end{equation}
where $d^{M^\star}_{\text{dst}}$ and $d^{M^\star}_{\min}$ are the maximum and the minimum values of $d^M_{\text{dst}}\big( \breve{V}(t_i), \breve{\phi}(t_i)\big)$ and $d^M_{\min}\big( \breve{V}(t_i), \breve{\phi}(t_i)\big)$, respectively. The optimal control pair is then selected as 
\begin{equation}  \label{cc_cont}
    \big(V(t_i), \phi(t_i)\big) = \mathop{\arg \min}_{(\breve{V}(t_i), \breve{\phi}(t_i))\in \bm{\Omega}'(t_i)}E\big(\breve{V}(t_i), \breve{\phi}(t_i)\big)
\end{equation}

The overall procedure of the MD-DWA based C\&C signal generation is summarized in \textbf{\algref{DWA}}. 

\begin{algorithm}[h]
    \caption{The MD-DWA based C\&C Signal Generation}
    \label{DWA}
    \begin{algorithmic}[1]
    \REQUIRE
    Estimated UAV and obstacles positions with covariance matrices.
    \ENSURE
    The C\&C velocity $V(t_i)$ and heading angle $\phi(t_i)$.
    \WHILE{the task is not completed}
    \IF{a sensing signal is transmitted at $ t_{i-1}$}
    \STATE Generate a feasible set $\bm{\Omega}(t_i)$ via \eqref{omega}.
    \STATE Generate prediction trajectories based on each candidate pair in $\bm{\Omega}(t_i)$ according to \eqref{pre_traj1}.
    \IF{no obstacles are detected}
    \STATE Evaluate candidates in $\bm{\Omega}(t_i)$ using \eqref{m_des}.
    \ELSE
    \STATE Refine the feasible set as $\bm{\Omega}'(t_i)$ via \eqref{md_cons}.
    \STATE Calculate the evaluation function according to \eqref{evalu}.
    \ENDIF
    \ENDIF
    \STATE Select the optimal $V(t_i)$ and $\phi(t_i)$ based on \eqref{cc_cont}.
    \ENDWHILE
    \end{algorithmic}
\end{algorithm}

\textit{Remark 2: Computational Complexity Analysis of Algorithm 1.} The construction of the feasible control set $\bm{\Omega}(t_i)$ at step 3 requires discretizing the admissible velocity and heading angle spaces. Let $N_V$ and $N_{\phi}$ denote the number of discretization levels for velocity and heading angle, respectively. The complexity of generating $\bm{\Omega}(t_i)$ is therefore $\mathcal{O}(N_V N_{\phi})$. At step 4, for each candidate control pair in $\bm{\Omega}(t_i)$, a prediction trajectory over a horizon of length $\mathrm{I_p}$ is generated according to \eqref{pre_traj1}. This step incurs a complexity of $\mathcal{O}(N_V N_{\phi} \mathrm{I_p})$. The evaluation of all candidate pairs (Steps 5--10) involves computing the corresponding objective values, which has a complexity of $\mathcal{O}(N_V N_{\phi})$. Finally, the `$\arg\min$' operation in \eqref{cc_cont} also requires $\mathcal{O}(N_V N_{\phi})$. Combining the above steps, the overall computational complexity of \textbf{\algref{DWA}} is $\mathcal{O}\big(N_V N_{\phi} \mathrm{I_p}\big)$, where the dominant term is linear in the prediction horizon $\mathrm{I_p}$. This indicates that the algorithm scales linearly with both the discretization granularity and the prediction horizon, and is therefore suitable for real-time implementation.

\subsection{Effectiveness-Aware DQN for Sensing and C\&C Signal Transmission}
Based on the KF and MD-DWA, we develop an E-DQN to control the transmission of sensing and C\&C signals. To align with the objective functions in \eqref{eq1} and \eqref{eq2}, the state, action, and reward in E-DQN are carefully designed. In particular, to meet the objectives in $\mathcal{P}1.1$/$\mathcal{P}1.2$—namely, enabling the UAV to reach its destination as quickly as possible without collision while minimizing the total number of transmitted signals—the MDP state is designed to capture goal-oriented information. Specifically, the observation state of E-DQN at the $t_i$-th time slot is defined as
\begin{equation}
\mathcal{S}(t_i) \triangleq \Big\{d_{\text{dst}}^M(t_i),~d_{\text{obs}}^M(t_i),~\det\big(\bm{{\widetilde{\bm\Gamma}}}(t_i)\big),~t_i,~N_{\delta}(t_i)\Big\},
\end{equation}
where $d_{\text{dst}}^M(t_i)=\|\widetilde{\bm p}(t_i)- \bm p_{\text{dst}} \|_M$ is the Mahalanobis distance to the destination, $d_{\text{obs}}^M(t_i)=\min \{d_o^M | o \in \hat{\mathcal{O}}(t_i) \}$ is the Mahalanobis distance to the detected obstacles,
 $\det(\bm{\widetilde{\bm\Gamma}}(t_i))$ is the determinant of $\bm{\widetilde{\bm\Gamma}}(t_i)$, $N_{\delta}(t_i)=\sum_i(\delta_c(t_i)+\delta_s(t_i) )$ is the accumulated number of transmitted signals up to $t_i$. To solve the transmission decision variables $\delta_c(t_i)$ and $\delta_s(t_i)$ in $\mathcal{P}1.1$/$\mathcal{P}1.2$, the action of DQN at $t_i$ is defined as
\begin{equation}
    \mathcal{A}(t_i) \triangleq \Big\{0,~1,~2\Big\},
\end{equation}
where $0,~1,~2$ represents stay silent, transmit sensing signal at $t_i$ and do not transmit communication signal at $t_{i+1}$, transmit sensing signal at $t_i$ and transmit communication signal at $t_{i+1}$, respectively. The reward of E-DQN at $t_i$ is defined as
\begin{equation}
    R(t_i) \triangleq \text{VoI}(t_i) - \text{Cost}(t_i) - \Psi_{\text{step}}(t_i) -\Psi_{\text{col}}(t_i),
\end{equation}
where each term is specified below:

\noindent\textbf{(a)} $\text{VoI}(t_i)$ contains the VoI of sensing and communication signals, which is defined as 
\begin{equation}
    \text{VoI}(t_i) \triangleq \text{VoI}_s(t_i) + \text{VoI}_c(t_i),
\end{equation}
in which the VoI of the sensing signal $\text{VoI}_s(t_i)$ is quantified by the reduction in estimation error entropy
\begin{equation}
\begin{aligned}
    \text{VoI}_s(t_i) 
    &\triangleq \mathbb{H}(\bm{\widetilde{\bm\Gamma}}(t_{i-1})) - \mathbb{H}(\bm{\widetilde{\bm\Gamma}}(t_{i})) \\
    &= \frac{1}{2}\ln\bigg(\frac{\det\big(\bm{\widetilde{\bm\Gamma}}(t_{i-1})\big)}{\det\big(\bm{\widetilde{\bm\Gamma}}(t_i)\big)} \bigg),
\end{aligned}
\end{equation}
where $\mathbb{H}(\bm{\widetilde{\bm\Gamma}}(t_i))$ denotes the entropy of the covariance matrix $\bm{\widetilde{\bm\Gamma}}(t_i)$. The VoI of the C\&C signal $\text{VoI}_c(t_i)$ can be measured by navigation improvement of the UAV
\begin{equation}
    \text{VoI}_{c}(t_i) \triangleq \Delta||\mathring{\bm{p}}(t_{i+1}) - \bm p_{\text{dst}}|| + \Delta1_{\text{col}}(t_i),
\end{equation}
where $\Delta||\mathring{\bm{p}}(t_{i+1}) - \bm p_{\text{dst}}||$ is the difference in UAV–destination distance with and without transmitting the C\&C signal, $\mathring{\bm{p}}(t_{i+1})$ is the KF-predicted UAV position at time slot $t_{i+1}$ based on the real UAV position $\bm{p}(t_i)$. 
It is worth noting that the proposed E-DQN operates in two different phases, i.e., training phase and deployment phase. Real UAV position $\bm{p}(t_i)$ is observable during the training phase. During the deployment phase, the reward function is no longer calculated or required, and the trained policy operates solely on observation state $\mathcal{S}(t_i)$ and outputs an action, thereby mitigating the training-deployment gap. $\Delta 1_{\text{col}}(t_i)=1$ if transmitting the C\&C signal at $t_i$ can help UAV avoid obstacle while no transmission causes collision, and $0$ otherwise. Although a large collision penalty $\Psi_{\text{col}}(t_i)$ is introduced to strongly discourage unsafe behaviors, relying solely on such a sparse and terminal penalty may lead to slow convergence and unstable learning. Thus, we introduce $\Delta1_{\text{col}}(t_i)$ as a step-wise shaping reward, which provides intermediate and informative feedback, guiding the agent to recognize the importance of C\&C transmission in safety-critical situations. As a result, the agent can learn safer navigation strategies more efficiently.

\noindent\textbf{(b)}  
$\text{Cost}(t_i)$ is the cost for transmitting sensing and C\&C signals. Let us denote $A(t_i) \in \mathcal{A}(t_i)$, and $\text{Cost}(t_i)$ can be defined as
\begin{equation}
    \text{Cost}(t_i) \triangleq
    \begin{cases}
        0.5, & \text{if } A(t_i)  = 1, \\
        1,   & \text{if } A(t_i) = 2, \\
        0,   & \text{otherwise}.
    \end{cases}
\end{equation}

\noindent\textbf{(c)} $\Psi_{\text{step}}(t_i)$ is the step penalty introduced to encourage E-DQN to complete the task as quickly as possible. Its value increases gradually with $t_i$, thereby penalizing longer time. Specifically, it is defined as
\begin{equation}
    \Psi_{\text{step}}(t_i) \triangleq \frac{1}{1 + \mathrm{e}^{-0.1\cdot i}},
\end{equation}
which adopts a sigmoid form to ensure smooth growth and bounded normalization.

\noindent\textbf{(d)}  
$\Psi_{\text{col}}(t_i)$ is the collision penalty introduced to avoid collision, which is given as 
\begin{equation}
\Psi_{\text{col}}(t_i)\triangleq
    \begin{cases}
        \Psi, \quad \text{if } \big\Vert\bm{p}(t_i)  - \bm{p}_o(t_i)\big\Vert \le \mathrm{D}_{\text{safe}},  \\
        0, \quad \text{otherwise}.
    \end{cases}
\end{equation}

Note that the problems $\mathcal{P}1.1$ and $\mathcal{P}1.2$ aim to jointly minimize the task completion time and the total number of transmitted signals. These objectives are inherently sequential and decision-dependent, as the UAV state evolves over time and transmission decisions affect both future system states and accumulated costs. Therefore, based on above definitions, the problems can be naturally reformulated as a sequential decision-making problem, i.e., Markov decision process (MDP) problem:
\begin{equation}
\mathcal{P}2:~ \max_{\pi_{\omega}(A(t_i)|\mathcal{S}(t_i))} \; \mathbb{E}_{\pi_{\omega}} \Bigg[ \sum_{i=0}^I \gamma^i R(t_i) \Bigg]
\end{equation}
where $\pi_{\omega}(A(t_i)|\mathcal{S}(t_i))$ denotes the E-DQN policy parameterized by $\omega$, specifying which action $A(t_i)\in\mathcal{A}(t_i)$ is taken when observing state $\mathcal{S}(t_i)$, and $\gamma \in (0,1)$ is the discount factor that balances the importance of immediate and future rewards. A larger $\gamma$ encourages the agent to focus on long-term performance, which is essential in our problem since the objective is inherently long-term.
The formulation of expected cumulative reward $\sum_{i=0}^I \gamma^i R(t_i)$ in MDP serves as a surrogate for the original multi-objective problem: maximizing $\text{VoI}(t_i)-\text{Cost}(t_i)$ encourages transmissions only when they are critical to the task, while minimizing $\Psi_{\text{step}}(t_i)$ and $\Psi_{\text{col}}(t_i)$ promotes fast task completion and collision avoidance. Therefore, maximizing the expected cumulative reward in the MDP is aligned with solving $\mathcal{P}1.1$/$\mathcal{P}1.2$. From a theoretical perspective, the MDP formulation can be interpreted as a stochastic and model-free approximation of the original dynamic optimization problem, where the long-term objective is optimized via Bellman optimality. While an explicit closed-form equivalence is difficult to establish due to the coupled system dynamics and uncertainty, the proposed state/action/reward shaping ensures that the learned policy of DQN approximates the desired solutions to $\mathcal{P}1.1$/$\mathcal{P}1.2$.

Following the classical DQN structure \cite{DQN_nature}, two DNNs with identical architectures are maintained in E-DQN. One is called `eval network' with parameters $\omega$, used for learning and decision-making; the other is called `target network' with parameters $\omega^-$, updated periodically to stabilize training. To obtain the optimal parameters, the E-DQN framework is trained in the following way: At each time slot $t_i$, the eval network observes the current state $\mathcal{S}(t_i)$ and selects an action $A(t_i)$ according to $\pi_\omega$. To balance exploration and exploitation, the action is chosen via an $\varepsilon$-greedy strategy
\begin{equation} \label{DQN_action}
    A(t_i) \triangleq \begin{cases}
        \mathop{\arg\max}\limits_{A(t_i)\in\mathcal{A}(t_i)}~ Q(\mathcal{S}(t_i),A(t_i)|\omega), ~~\text{if } \varepsilon \leq \varepsilon_0, \\
        \text{a random action} , ~~ \text{if } \varepsilon > \varepsilon_0,
    \end{cases} 
\end{equation}
where $\varepsilon \in [0,1]$ is a random variable, $\varepsilon_0 \in [0,1]$ is the exploration probability threshold. $Q(\mathcal{S}(t_i),A(t_i)|\omega)$ is the state-action value function \cite{DQN} and defined as 
\begin{equation}
    Q\big(\mathcal{S}(t_i),A(t_i)|\omega\big) \triangleq \mathbb{E}_{\pi_\omega}\Bigg[ \sum_{\varsigma=0}^{I-i} \gamma^{\varsigma} R(t_{i+\varsigma}) \Big | \mathcal{S}(t_i), A(t_i) \Bigg].
\end{equation}

After executing action $A(t_i)$, the agent receives reward $R(t_i)$ and the environment transitions to next state $\mathcal S(t_{i+1})$. The agent stores the tuple $(\mathcal S (t_i),A(t_i),R(t_i),\mathcal S(t_{i+1}))$ as an experience into its memory buffer. When the memory buffer is full, the DQN is trained by randomly sampling $\mathrm{N}_0$ experiences
\begin{equation}
    (\mathcal S_n,A_n,R_n,\mathcal S'_n), \quad n=1,2,...,\mathrm{N}_0,
\end{equation}
where $\mathcal S_n$, $A_n$, $R_n$, and $\mathcal S'_n$ denote the current state, action, reward, and next state of the $n$-th sampled experience, respectively. Random sampling helps break the temporal correlation among experiences. Based on these random samples, E-DQN minimizes the mean squared error between the predicted and the target state-action values, which is calculated as
\begin{equation} \label{loss}
\begin{aligned}
    L(\omega) =& \frac{1}{\mathrm{N_0}}\sum_{n=1}^{\mathrm{N_0}}\Bigg[R_n
    + \gamma \max_{A_n'}Q\big(\mathcal S'_n,A_n'|\omega^-\big) \\
    &-Q(\mathcal S_n,A_n|\omega) \Bigg]^2.
\end{aligned}
\end{equation}

The eval network parameters $\omega$ are updated via gradient descent 
\begin{equation} \label{eval_parameter}
    \omega \leftarrow \omega + \mathrm{L_r}\nabla_{\omega}L(\omega),
\end{equation}
where $\nabla_{\omega}L(\omega)$ denotes the gradient of the loss function $L(\omega)$, and $\mathrm{L_r} \in (0, 1)$ is the learning rate of the eval network. To stabilize learning, the target network parameters are updated every $\mathrm{N_U}$ steps 
\begin{equation} \label{target_parameter}
    \omega^- \leftarrow \omega.
\end{equation}

The overall training procedure is summarized in \textbf{\algref{DQN}}.
\begin{algorithm}[h]
    \caption{E-DQN for Signal Transmission}
    \label{DQN}
    \begin{algorithmic}[1]
    \REQUIRE
    Observation $\mathcal{S}(t_i)$ at each time slot
    \ENSURE
    Action $A(t_i)$ at each time slot.
    \STATE Initialize eval network parameters $\omega$ and target network parameters $\omega^-\leftarrow\omega$. 
    \STATE Set memory buffer capacity $\mathrm{C_M}$. Initialize counter $c \gets 0$.
    \STATE Collect initial experiences until the memory buffer is full.
    \FOR {each episode}
    \WHILE {the task is not completed}
    \STATE Select action $A(t_i)$ based on state $\mathcal{S}(t_i)$ using the $\varepsilon$-greedy policy in \eqref{DQN_action}.
    \STATE Execute action $A(t_i)$. The environment transitions into next state $\mathcal S(t_{i+1})$ and the agent receives reward $R(t_i)$.
    \STATE Store experience tuple $(\mathcal{S}(t_i), A(t_i), R(t_i), \mathcal{S}(t_{i+1}))$ in the memory buffer at index $c \bmod \mathrm{C_M}$.
    \STATE Randomly sample $\mathrm{N_0}$ experiences from the memory buffer.
    \STATE Compute the loss $L(\omega)$ using \eqref{loss} and update eval-network parameters via \eqref{eval_parameter}.
    \IF{$c \bmod \mathrm{N_U} = 0$}
    \STATE Update target network parameters: $\omega^- \leftarrow \omega$.
    \ENDIF
    \STATE Update state $\mathcal{S}(t_i) \gets \mathcal{S}(t_{i+1})$ and counter $c \gets c+1$.
    \ENDWHILE
    \ENDFOR
    \end{algorithmic}
\end{algorithm}
   
\textit{Remark 3: Computational Complexity Analysis of Algorithm 2.} 
\textbf{\algref{DQN}} describes the training process of the E-DQN. The computational complexity is mainly determined by the size and structure of the underlying DNNs. Specifically, the eval network and target network share the same architecture. Suppose each network consists of $L$ layers, where the $l$-th layer contains $N_l$ neurons. The dominant complexity during training (steps 9--14) arises from forward and backward propagation, which can be expressed as $\mathcal{O}\big(\mathrm{N_0}\sum_{l=1}^{L-1} N_l N_{l+1}\big)$ per update step \cite{dqn_complexity}. This complexity accounts for both inference and gradient computation across all layers. After training converges, only steps 5--7 (1-greedy policy at step 5) and steps 14--15 are executed, without any parameter updates. The interaction with the environment requires only forward propagation through the eval network. Therefore, the computational complexity per decision step reduces to $\mathcal{O}\big(\sum_{l=1}^{L-1} N_l N_{l+1}\big)$. Since the adopted DNN is lightweight, this inference complexity is typically low, making the proposed E-DQN suitable for real-time UAV control applications.

\subsection{Sensing and C\&C Signal Transmission Process}
\begin{figure}[t] 
    \centering
    \includegraphics[width=0.3\textwidth]{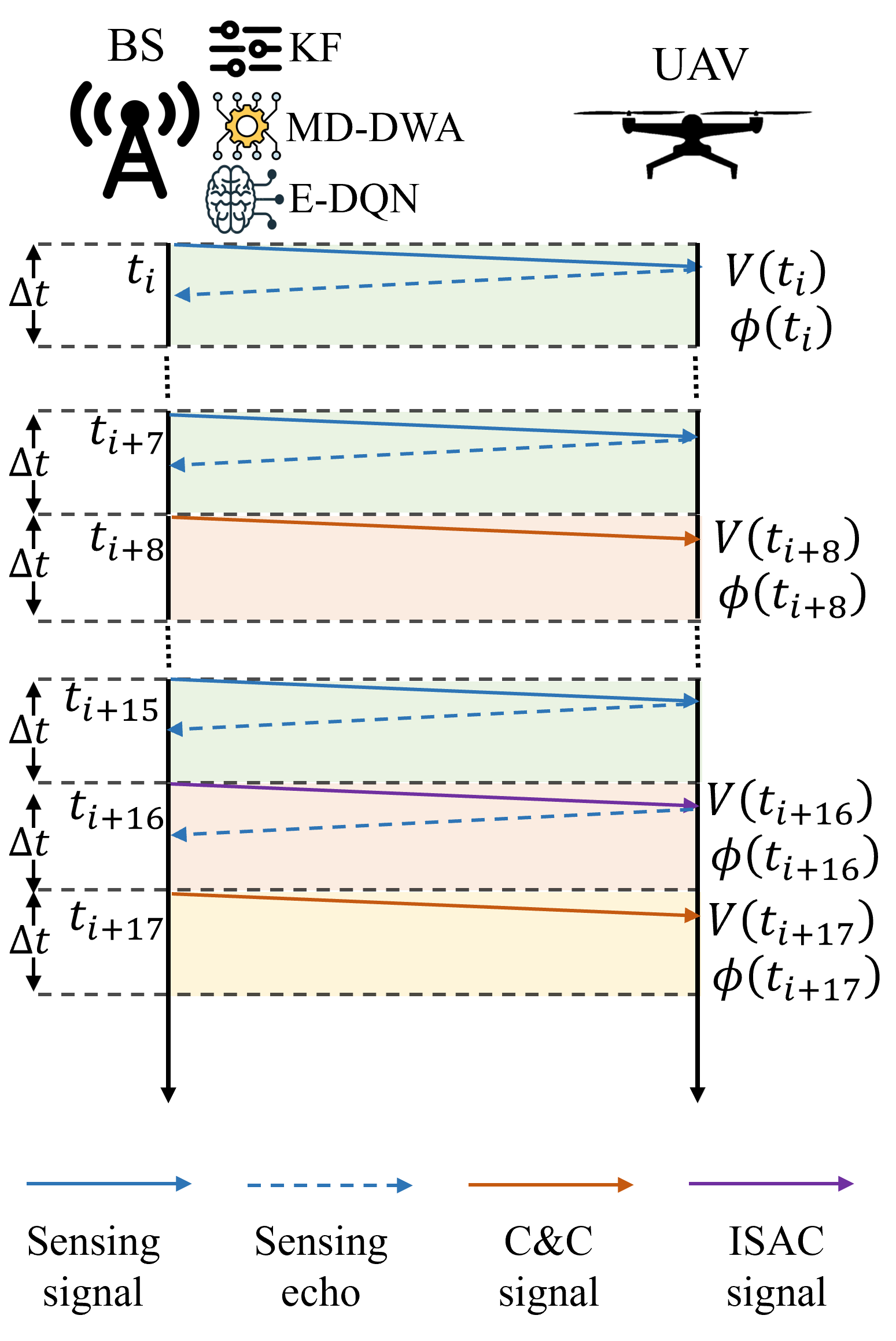}
    \caption{An example of sensing and C\&C signal transmission process under our proposed GOSC framework.}
    \label{pp_framework}
    \vspace{-0.3 cm}
\end{figure}

For better understanding, \figref{pp_framework} illustrates an example of the sensing and C\&C signal transmission process of our proposed GOSC framework. Specifically, at the beginning of the $t_i$-th time slot, E-DQN observes the environment state $\mathcal{S}(t_i)$ and decides to transmit only a sensing signal, i.e. $A(t_i)=1$. During the subsequent six time slots, the BS remains silent, i.e., $A(t_{i+(1\sim6)})=0$. At $t_{i+7}$, the E-DQN chooses to transmit a sensing signal at the current slot and a C\&C signal at the next slot $t_{i+8}$, i.e., $A(t_{i+7})=2$, followed by another seven idle slots. At $t_{i+15}$ and $t_{i+16}$, the E-DQN again selects $A(t)=2$. It can be observed that, unlike the traditional framework where ISAC signals are transmitted continuously, the BS in our GOSC framework selectively transmits sensing, C\&C, or ISAC signals. Consequently, the overall communication overhead can be significantly reduced.
 
\textit{Remark 4: Robustness of the Proposed GOSC Framework to the Sensing--Processing--Actuation Delay.} In practical implementations, sensing--processing--actuation delay introduces a temporal mismatch between the true system state and the state used for decision-making. In the proposed system, such delay is expected to remain moderate due to the relatively low computational complexity of the KF, MD-DWA, and E-DQN modules, as well as short sensing signal processing and communication latency. Moreover, the proposed GOSC framework is inherently robust to moderate delays. First, the KF explicitly models process noise, and its covariance propagation naturally captures the growth of uncertainty under delayed observations. Second, the MD-DWA incorporates uncertainty-aware safety constraints (e.g., Mahalanobis distance and the minimum safety distance $\mathrm{D_{safe}}$), which enhance robustness against state estimation errors induced by delays. Third, the VoI-based transmission mechanism adapts to increased uncertainty by triggering more frequent sensing and C\&C updates when necessary.

\section{Simulations and Analysis}  \label{simulation}
In this section, extensive simulations are conducted to evaluate the performance of our proposed GOSC framework. The key simulation parameters are summarized as follows.
The time slot interval is set to $\Delta t = 5$ ms.  
The UAV starts from the initial position $\bm{p}(t_0) = [0.1,~0.1]^\top$~m and flies toward the destination $\bm{p}_{\text{dst}} = [10,~10]^\top$~m, maintaining a fixed altitude of $\mathrm{H} = 10$~m.  
The radar cross section of the UAV is $\sigma_{\text{RCS}} = 0.1$~m$^2$.  
The UAV’s maximum speed is $\mathrm{V}_{\max} = 4$~m/s, and the maximum changes in heading angle and speed per time slot are $\Delta\phi = \pi/6$~rad and $\Delta\mathrm{V} = 0.5$~m/s, respectively.  
The variance of the environmental disturbance affecting the UAV’s motion is $\sigma^2_\eta = 0.005$. 
The safety distance to avoid collision is $\mathrm{D}_{\text{safe}}=0.5$ m, while the threshold distance for the UAV to reach its destination is $\mathrm{D}_{\text{thr}}=0.3$ m.
A total of $N_o = 10$ dynamic obstacles are randomly distributed within a square area of $[2,~8] \times [2,~8]$~m$^2$ along the UAV’s flight route.  
Each obstacle moves with horizontal velocity components $V_{ox}(t_i)$ and $V_{oy}(t_i)$ uniformly distributed within $[-1,~1]$~m/s.  
The detection noise for each obstacle has variance $\sigma^2_o = 0.001$.  
The scanning radius for obstacles is $\mathrm{R}_{\text{scan}} = 2$~m.
The BS is located at $\bm{p}_{\text{BS}} = [0,~0]^\top$~m on the ground.  
It is equipped with $\mathrm{K} = 128$ antennas and transmits ISAC signals over $\mathrm{M} = 2500$ subcarriers, with a subcarrier spacing of $\Delta f = 120$~kHz.  
The carrier frequency is $f_c = 60$~GHz, and the antenna element spacing is $\mathrm{D} = 2.5$~mm.  
The transmit power of the BS is $\mathrm{P} = 20$~dBm, while the power spectral density of the AWGN is $\sigma^2_0 = -174$~dBm/Hz.  
The Rician factor of the wireless channel is set to $\kappa = 8$~dB.
Each C\&C signal has a size of $\mathrm{S} = 1$~kbit and is mapped $\mathrm{L} = 50$ times across subcarriers for reliable transmission.  
The discrete angular grid for beamforming contains $N_\vartheta = 500$ points.  
The confidence level factor is $\mathrm{B} = 2.576$, corresponding to a 99\% confidence level. 
The prediction horizon in MD-DWA is set to $\mathrm{I_p} = 20$ time slots.
We conduct simulations over $60$ random seeds, where the initial obstacle positions vary across seeds, and all the following simulation results are average values. 
The parameters of DQN are summarized in \textbf{\tabref{tab}}.

\begin{table}[h]
    \renewcommand\arraystretch{1.3}
    \caption{DQN Parameter Settings}\label{tab}
    \centering
    \begin{tabular}{|p{5cm}<{\centering}|p{2cm}<{\centering}|}
        \hline
        \textbf{Parameter description}       & \textbf{Value}     \\ 
        \hline
        Collision penalty $\Psi$      		 & 10   \\
        \hline
        Discount factor $\gamma$             & 0.9  \\
        \hline
        Exploration probability threshold $\varepsilon$ & 0.8 \\
        \hline
        Memory buffer capacity $\mathrm{C_M}$        & 8000  \\
        \hline
        Sample batch $\mathrm{N}_0$          & 32  \\
        \hline
        Learning rate $\mathrm{L_r}$         & 0.001   \\
        \hline
        Target network update step  $\mathrm{N_U}$      & 100 \\ 
        \hline
        Number of hidden layers          & 2 \\
        \hline
        Number of neurons in each hidden layer     & 128  \\
        \hline
    \end{tabular}
\end{table}

To demonstrate the robustness of the learning component in the E-DQN, we investigate the impact of several representative hyperparameters. The simulation results are shown in \figref{hyper_parameter}. As observed, when the discount factor $\gamma$ decreases to $0.85$, the convergence value becomes lower because the agent places less emphasis on future rewards, leading to a more myopic policy that sacrifices long-term performance. Meanwhile, the training process exhibits improved stability, as reduced sensitivity to long-term reward propagation mitigates oscillations during value updates. When the exploration rate $\varepsilon$ decreases to $0.7$, the convergence value becomes lower in this setting. Although a smaller $\varepsilon$ encourages more exploration, the excessive exploration manifests as a higher probability of collisions with obstacles, resulting in significant penalty accumulation. Consequently, the overall return is reduced and the training stability degrades. Finally, when the learning rate decreases to $0.0005$, convergence becomes slower because parameter updates are more incremental, resulting in reduced learning speed. However, the stability improves and the final convergence value becomes higher, as smaller update steps prevent overshooting and enable more precise approximation of the optimal Q-values.

For comparison, we present the other three ISAC signal transmission schemes as baselines:
\begin{itemize}
    \item \textbf{Traditional sensing and communication signals transmission (Trad-SC)} \cite{continuous}. As discussed in \sectionref{trad}, ISAC signals are transmitted at each time slot. The inflation-based DWA is employed to generate C\&C signals.
    \item \textbf{Periodic sensing and communication signal transmission (P-SC)} \cite{period}. The sensing and C\&C signals are transmitted every 10 time slots. The KF and MD-DWA are used for prediction and C\&C signal generation.
    \item \textbf{Event-trigger sensing and communication signal transmission (ET-SC)} \cite{ET}. This is a commonly method in robotic control system, in which control signals are triggered when the accumulated error exceeds a threshold. We extend the method for sensing and C\&C signals transmission: If no obstacles are detected, signals are transmitted when $\det({\widetilde{\bm\Gamma}}(t_i)) \ge 0.01$; otherwise, they are transmitted when $\det({\widetilde{\bm\Gamma}}(t_i)) \ge 0.001$. The KF and MD-DWA are also adopted in this scheme.
\end{itemize} 

\begin{figure}[t] 
    \centering
    \includegraphics[width=0.45\textwidth]{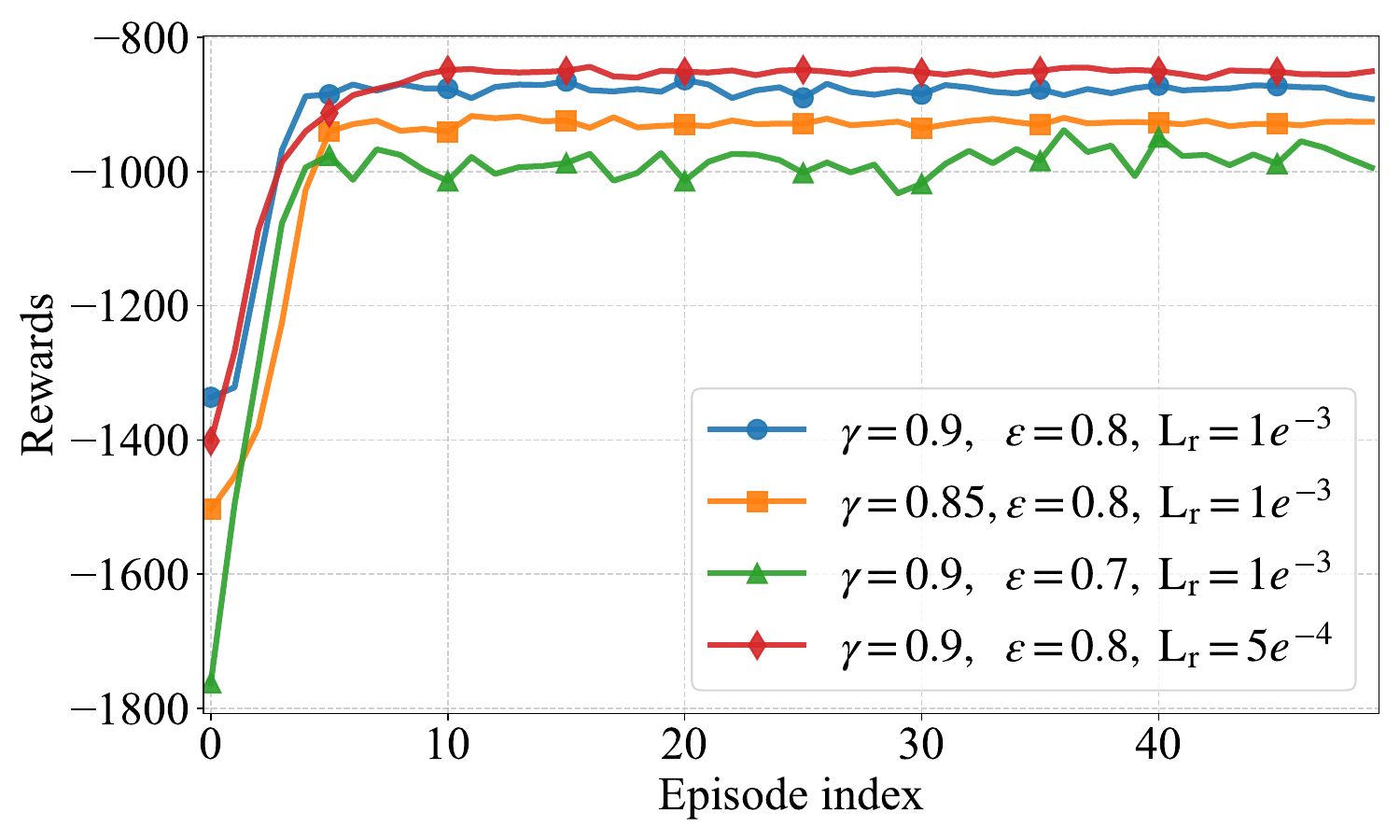}
    \caption{Convergence under different hyperparameters of E-DQN.}
    \label{hyper_parameter}
    \vspace{-0.2 cm}
\end{figure}

\begin{figure}[t] 
    \centering
    \subfloat[][Task success rate.]
    {\includegraphics[width=0.23\textwidth]{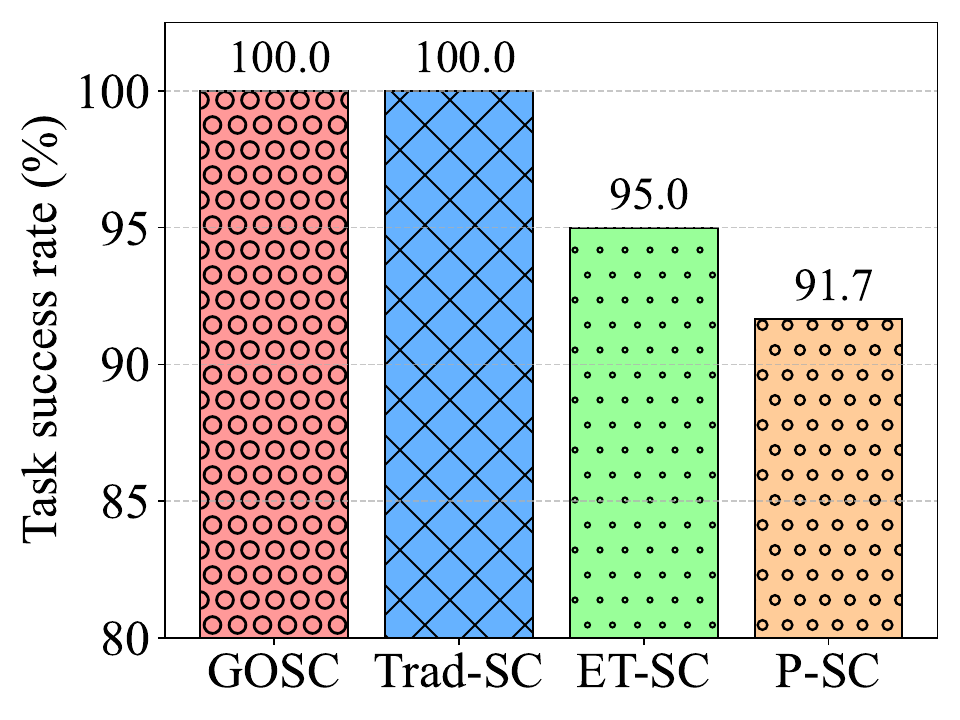}
        \label{success_rate}
    } 
    \subfloat[][Total number of transmitted sensing and C\&C signals.]
    {\includegraphics[width=0.23\textwidth]{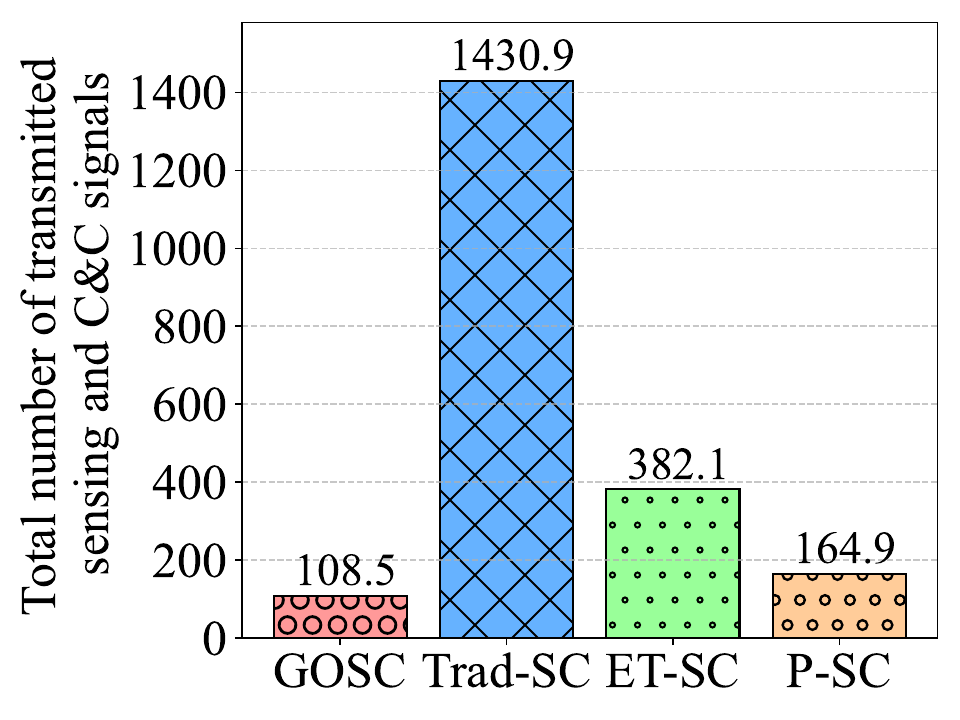}
        \label{total_signal}
    } \hspace{0.01 cm} \\ 
    \subfloat[][Total number of time slots required to complete the task.]
    {\includegraphics[width=0.23\textwidth]{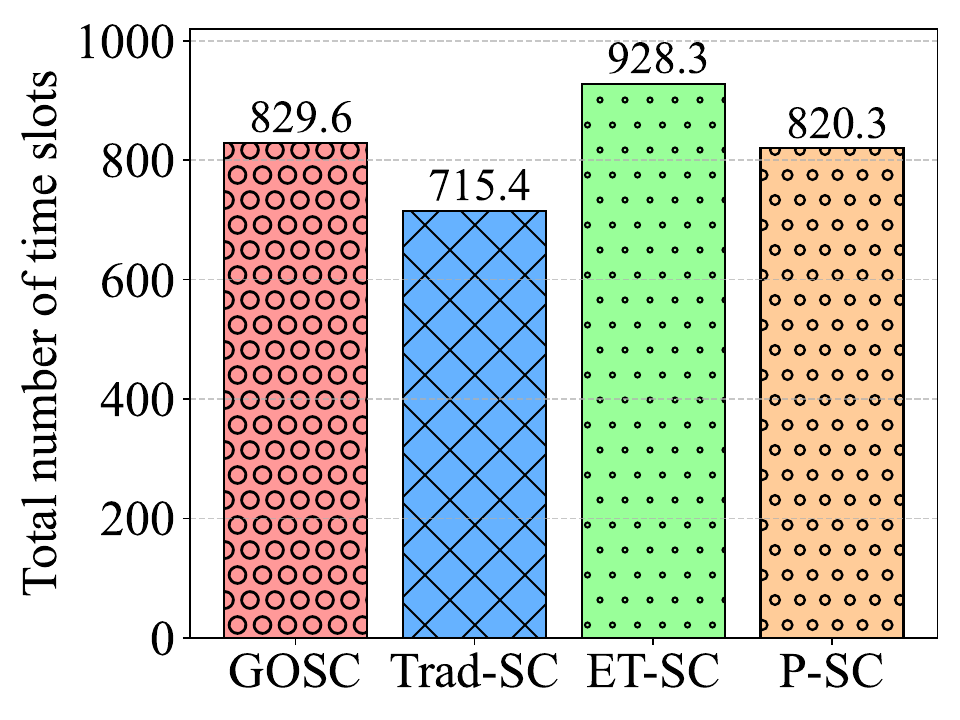}
        \label{total_slot}
    } 
    \subfloat[][UAV flying distance.]
    {\includegraphics[width=0.23\textwidth]{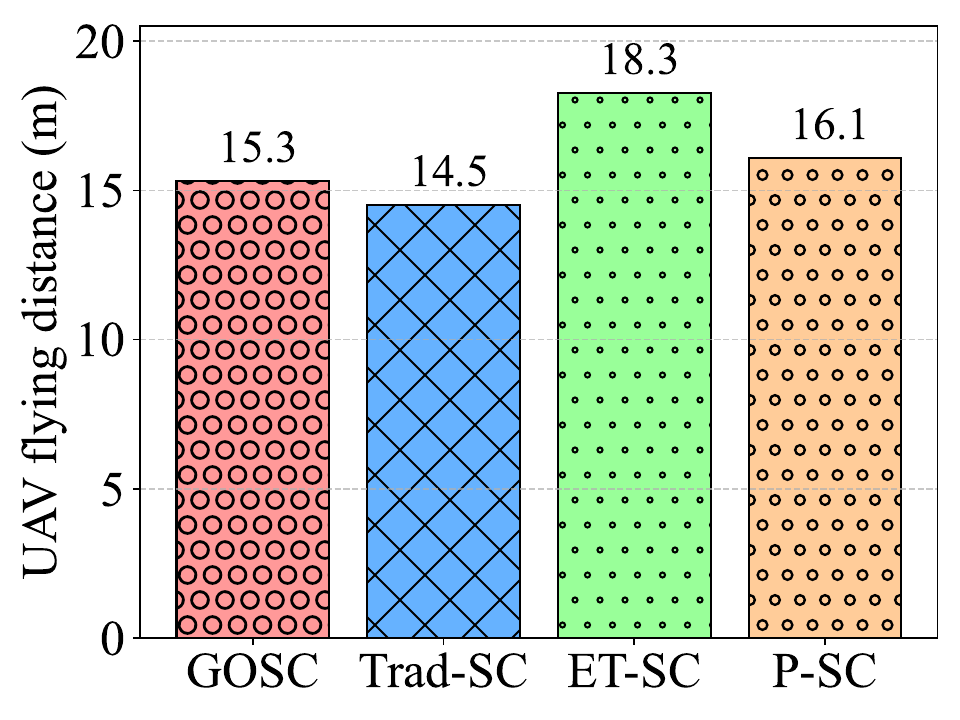}
        \label{path_length}
    } \hspace{0.01 cm} \\
    \subfloat[][The minimum Euclidean distance to obstacles.]
    {\includegraphics[width=0.23\textwidth]{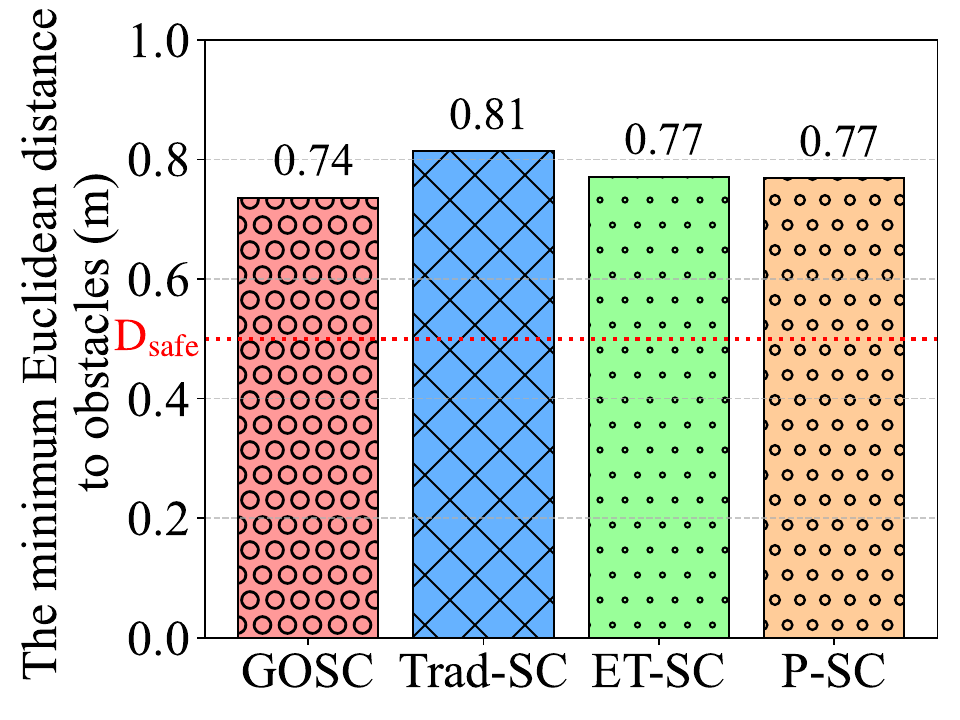}
        \label{mini_dist}
    }
    \subfloat[][Total number of transmission slots.]
    {\includegraphics[width=0.23\textwidth]{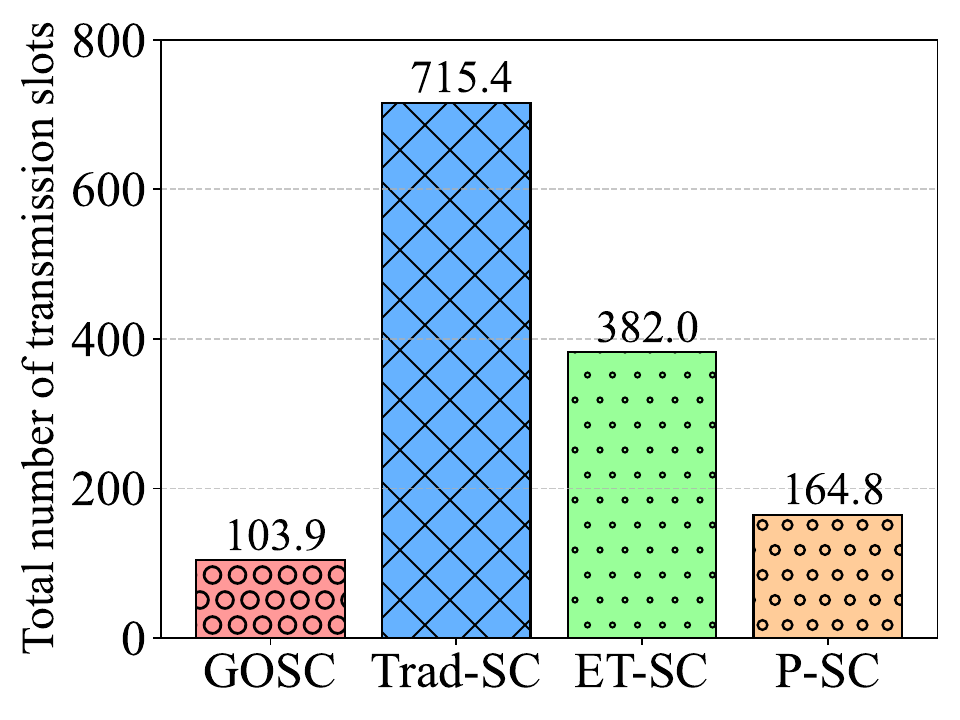}
        \label{trans_slots}
    } 
    \caption{Performances under different schemes.}
    \label{statistics}
    \vspace{-0.3 cm}
\end{figure} 

To examine objective-related performance, \figref{statistics} presents the task success rate, total number of transmitted signals, total number of time slots to complete the task, UAV flying distance, the minimum Euclidean distance to obstacles, and total number of transmission slots under the four schemes. First, the task success rates under different schemes are shown in \figref{success_rate}, representing the probability that the UAV successfully avoids obstacles and reaches its destination. It is no surprising that Trad-SC achieves a 100\% success rate, as ISAC signals are transmitted every time slot. This enables the BS to continuously update the UAV's position and provide real-time control. Remarkably, our proposed GOSC also achieves a 100\% success rate. This can be attributed to the effectiveness-aware DQN, which selectively transmits task-critical signals based on their effectiveness level. In contrast, ET-SC achieves a 95\% success rate. Indeed, its performance is highly sensitive to the signal transmission threshold. A lower threshold increases the likelihood of success but requires the BS to transmit more signals. Finally, P-SC yields the lowest success rate. The main reason is that when the UAV approaches obstacles, essential C\&C signals are needed to guide avoidance maneuvers. However, under P-SC, signal transmission follows a fixed periodic schedule, which may fail to provide vital updates when the UAV is approaching obstacles.

For fair comparison, only statistics from successful tasks are considered in \figref{total_signal}--\figref{trans_slots}. As shown in \figref{total_signal}, GOSC transmits the fewest signals, reducing the number of transmissions by 92.4\% compared to Trad-SC. This advantage stems from the GOSC framework, which can continuously predict the position of UAV and transmit only those effectiveness-relevant signals for safe navigation, thereby eliminating redundant signal transmission. ET-SC and P-SC also reduce the number of transmissions, but this comes at the expense of degraded reliability. Interestingly, when combined with the success rate results of Trad-SC, ET-SC and P-SC in \figref{success_rate}, it can be found that bit-oriented schemes face a distinct tradeoff where higher reliability demands more transmissions, GOSC effectively breaks this limitation, achieving high reliability and low communication costs simultaneously.

\figref{total_slot} and \figref{path_length} present the total number of time slots for the UAV to complete the task and the corresponding flying path length, respectively. Trad-SC achieves the fewest time slots and the shortest flying distance, owing to continuous signal transmission that ensures real-time trajectory update. The flying path length of GOSC is shorter than that of P-SC, while the average task completion time of GOSC is comparable to P-SC. This is because GOSC selectively transmits signals: while the content of C\&C signals remains effective, they are not transmitted every time after sensing to balance transmission cost and task completion time, which may prolong the overall task duration. ET-SC exhibits the worst performance in both metrics. This is due to its threshold-triggered signal transmission, where critical updates may be postponed if the threshold condition is not met, causing inefficient avoidance maneuvers and consequently longer path and completion time.

From \figref{mini_dist}, we observe that the average minimum Euclidean distance of Trad-SC is larger than that of the other three schemes. This is because the inflation-based DWA typically overestimates the obstacle space, resulting in more conservative avoidance maneuvers. In contrast, the MD-DWA provides a more accurate characterization of UAV and obstacles uncertainty, allowing the UAV to maintain a shorter but still safe distance from obstacles. \figref{trans_slots} shows the total number of transmission slots, which represents the time slots during which sensing and/or communication signals are transmitted. This metric reflects the wireless resource allocation cost. It is seen that GOSC reduces the average number of transmission slots by 85.5\% compared with Trad-SC, highlighting its strong capability in saving scarce wireless resources.

\begin{figure}[t] 
    \centering
    \subfloat[][UAV trajectories under GOSC.]
    {\includegraphics[width=0.23\textwidth]{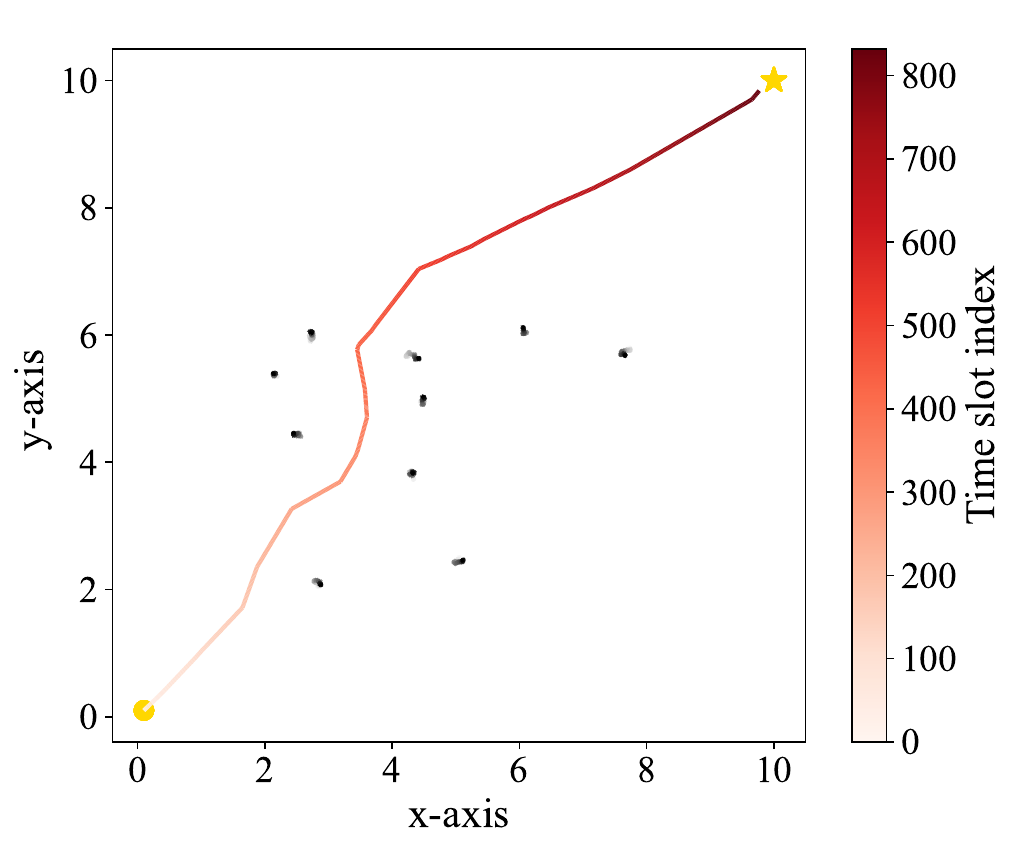}
        \label{GOSC}
    } 
    \subfloat[][UAV trajectories under Trad-SC.]
    {\includegraphics[width=0.23\textwidth]{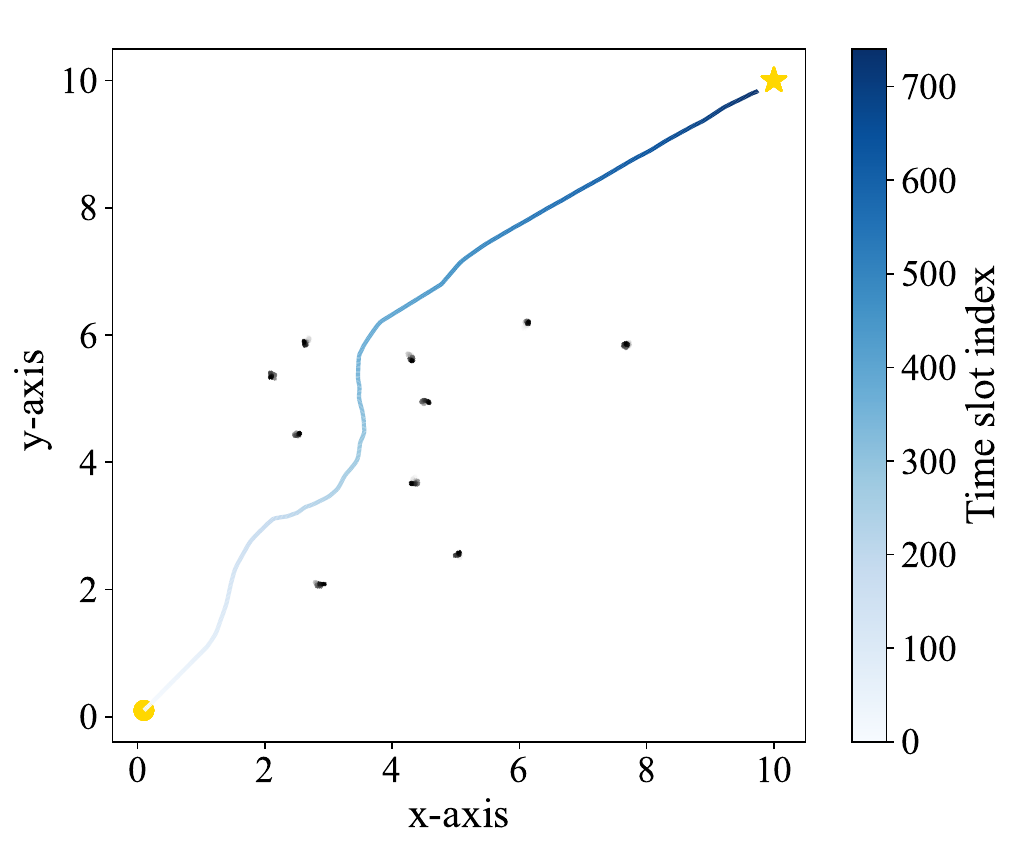}
        \label{Trad-SC}
    } \\
    \subfloat[][UAV trajectories under ET-SC.]
    {\includegraphics[width=0.23\textwidth]{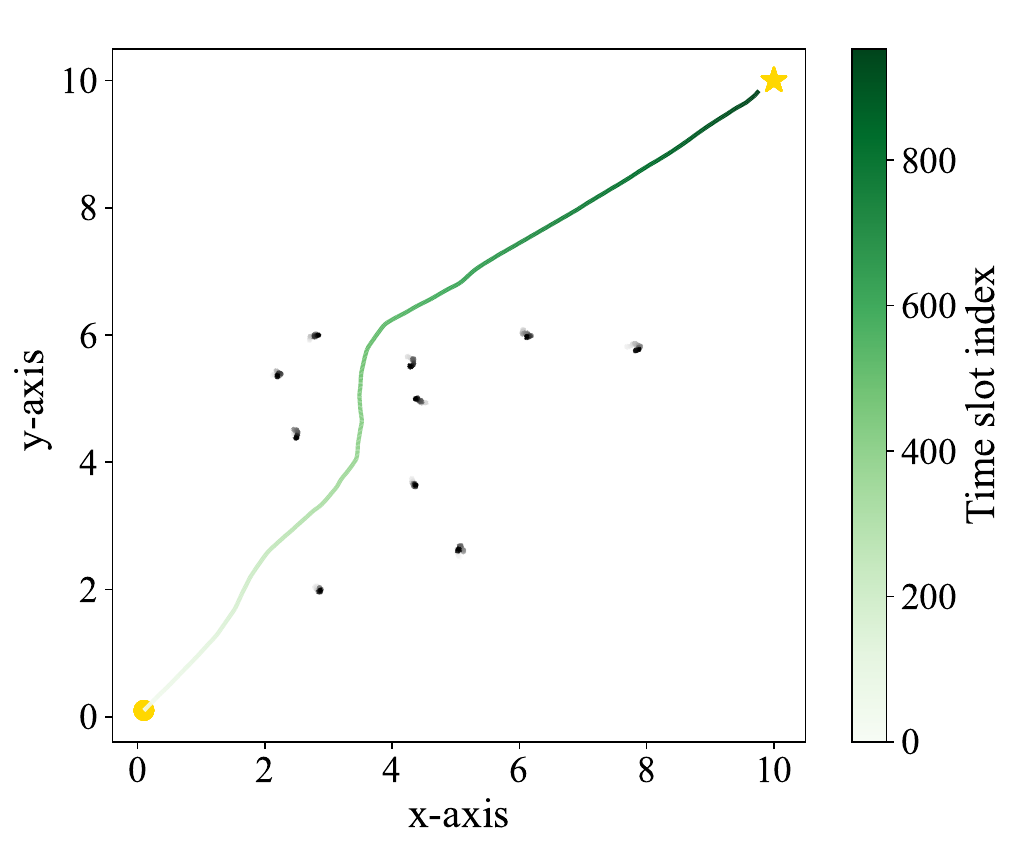}
        \label{ET-SC}
    } 
    \subfloat[][UAV trajectories under P-SC.]
    {\includegraphics[width=0.23\textwidth]{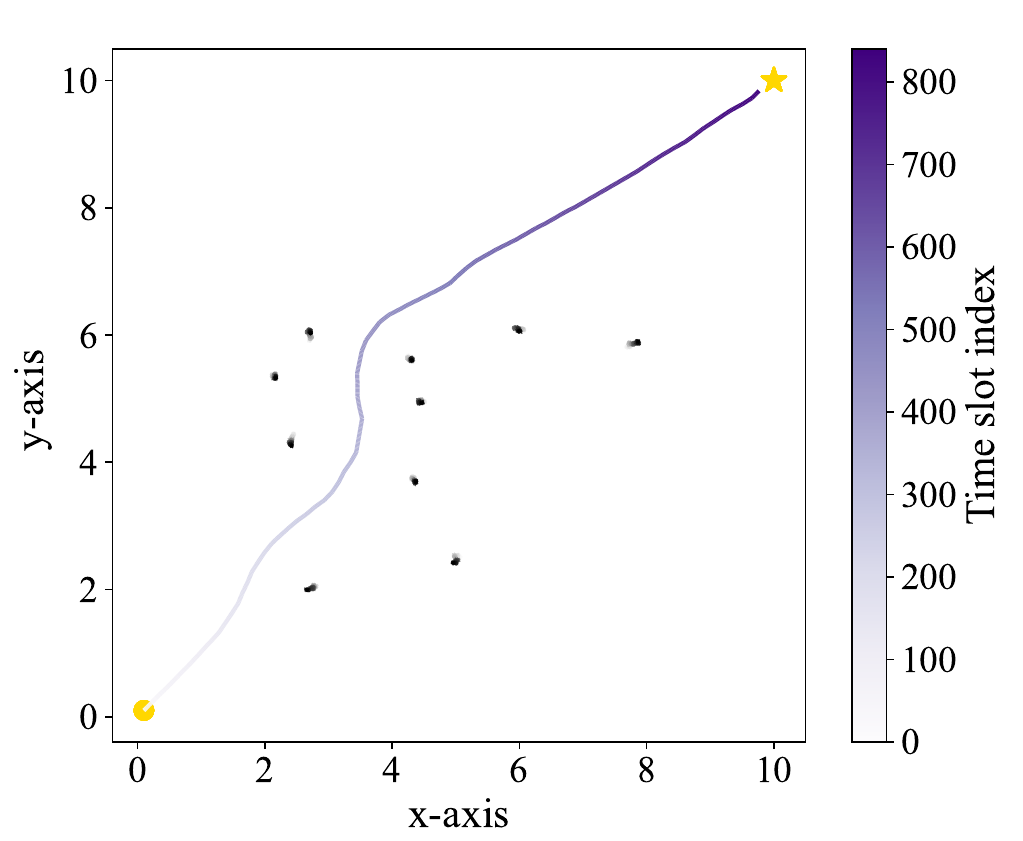}
        \label{P-SC}
    }
    \caption{UAV trajectories under different schemes.}
    \label{four_trajectories}
\end{figure} 

To provide an intuitive understanding of UAV movement in the task, we select a representative random seed and illustrate the UAV trajectories under the four schemes in \figref{four_trajectories}. For clarity, the obstacle detection process is omitted, as it is difficult to visualize in a static figure. We use varying color lines to represent the UAV trajectories over time. The black lines represent the movement of obstacles. 
Although all schemes adopt the DWA algorithm and thus exhibit broadly similar motion patterns, their behaviors differ significantly due to distinct transmission strategies. In Trad-SC transmission scheme, the UAV updates its motion in real time, resulting in the smoothest and the most reactive trajectories, especially in regions with dynamic obstacles. 
In the ET-SC scheme, transmissions occur more frequently when obstacles are detected near the UAV. Consequently, the trajectory is more reactive at the cluttered regions compared to the obstacle-free regions.
In the P-SC scheme, the UAV updates its motion at fixed intervals regardless of the environment. While the trajectory may exhibit a ``staircase'' pattern due to discrete updates, this effect is subtle as the update interval (50 ms, i.e., 10 time slots) is relatively short.
In contrast, the proposed GOSC scheme demonstrates longer linear trajectory segments in obstacle-sparse regions. This is because there is no collision risk at such situation. The BS suppresses unnecessary C\&C transmissions, and the UAV continues executing previously received commands. Such behavior significantly reduces signaling overhead without compromising safety. When the UAV approaches regions with higher obstacle density, the linear trajectory segments become shorter, as the VoI of sensing and C\&C signals increases, vital sensing and C\&C signals must be transmitted to ensure collision avoidance. 

\begin{figure}[t] 
    \centering
    \subfloat[][Performance comparison under different obstacle densities.]
    {\includegraphics[width=0.23\textwidth]{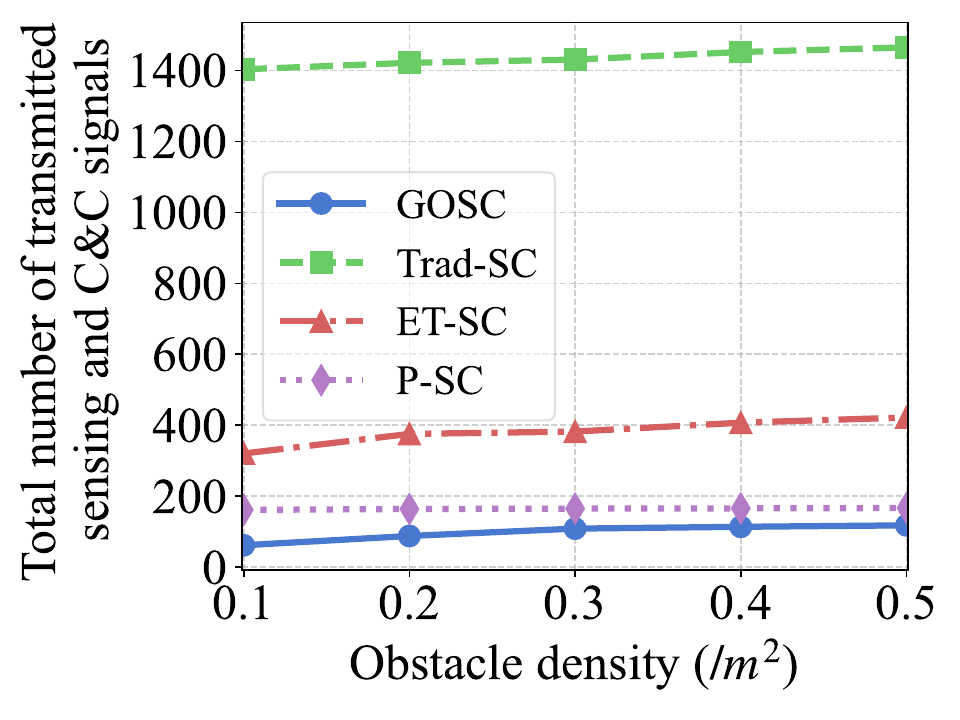}
    \includegraphics[width=0.23\textwidth]{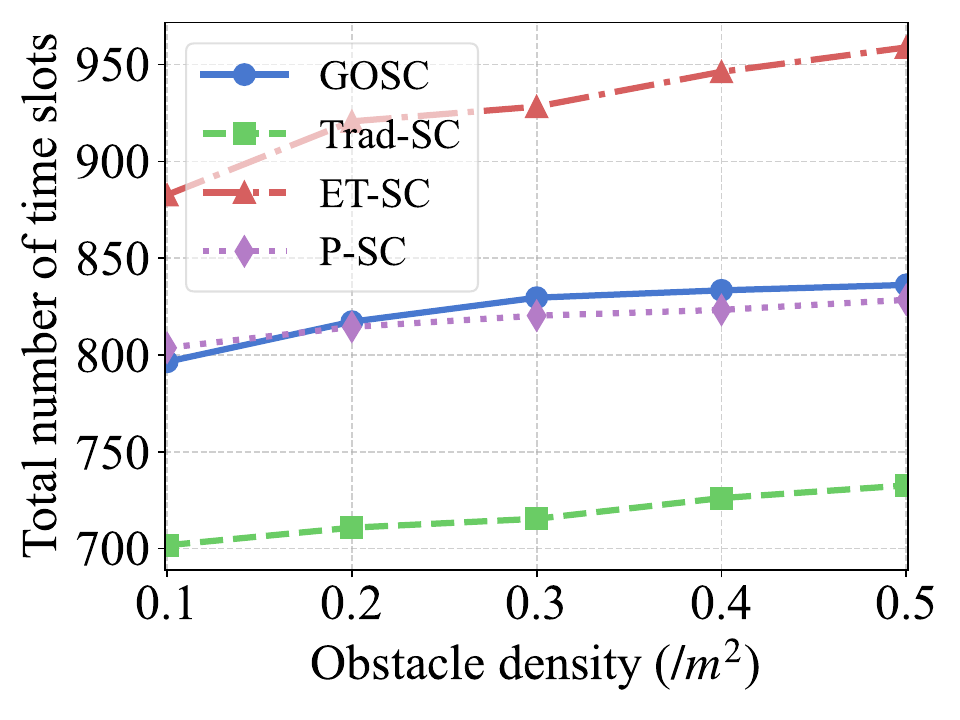}
    \label{obs_density}
    } \hspace{0.01 cm} \\

    \subfloat[][Performance comparison under different UAV maximum speeds.]
    {\includegraphics[width=0.23\textwidth]{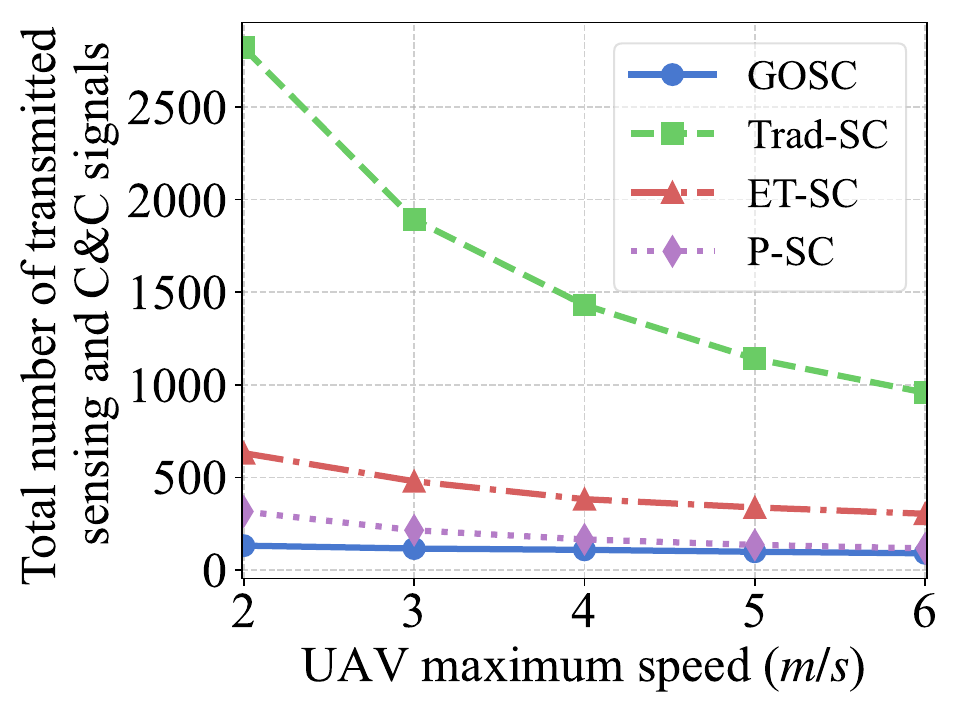}
    \includegraphics[width=0.23\textwidth]{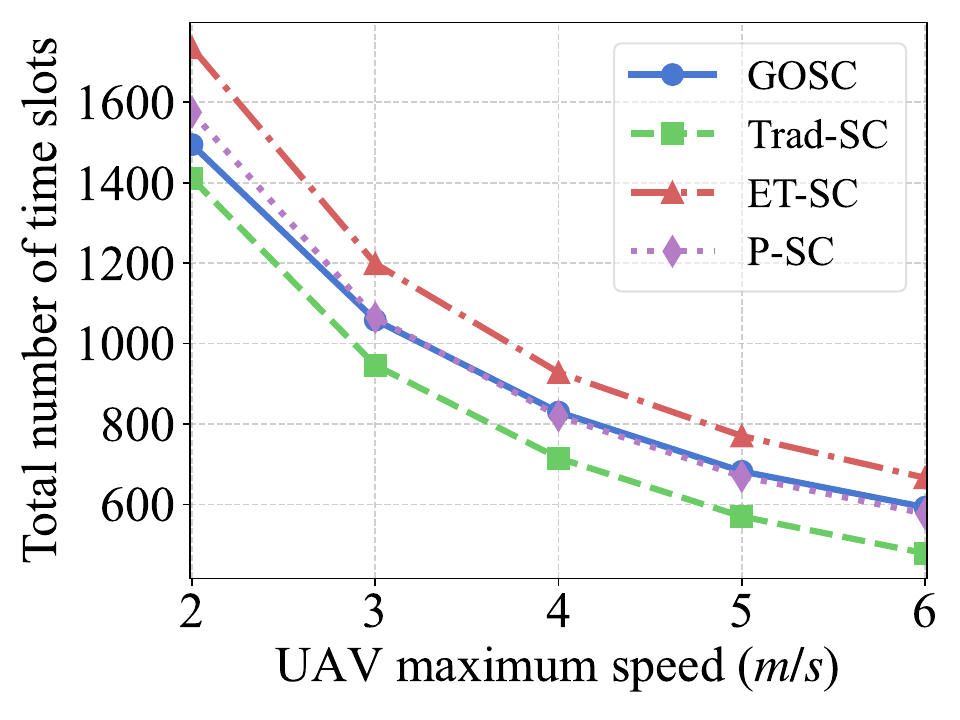}
    \label{uav_speed}
    } \hspace{0.01 cm} \\
    
    \subfloat[][Performance comparison under different bandwidths.]
    {\includegraphics[width=0.23\textwidth]{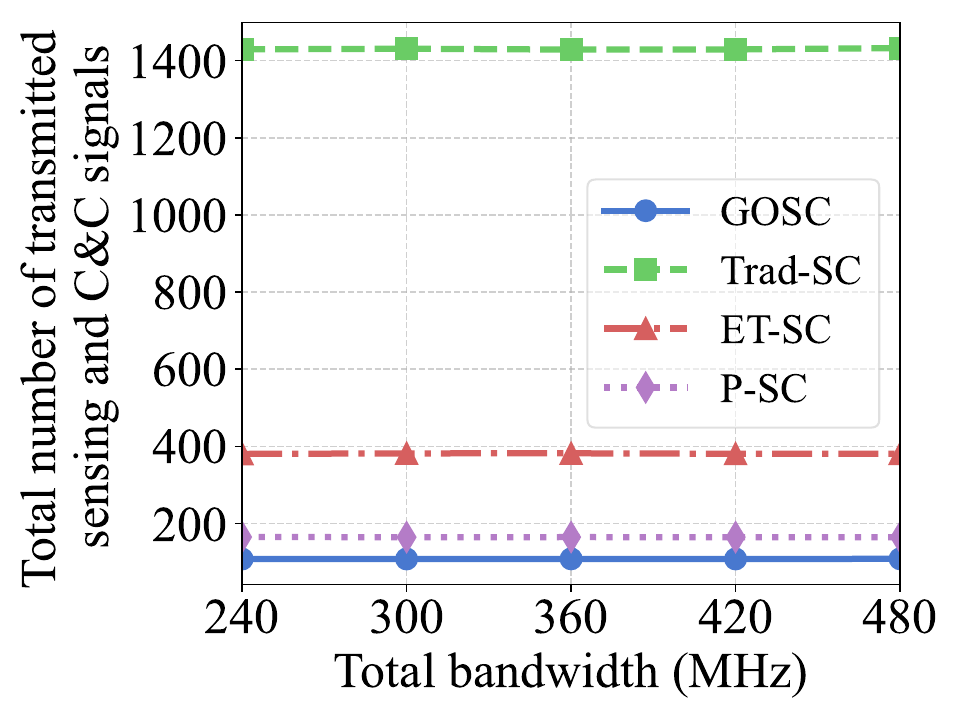}
    \includegraphics[width=0.23\textwidth]{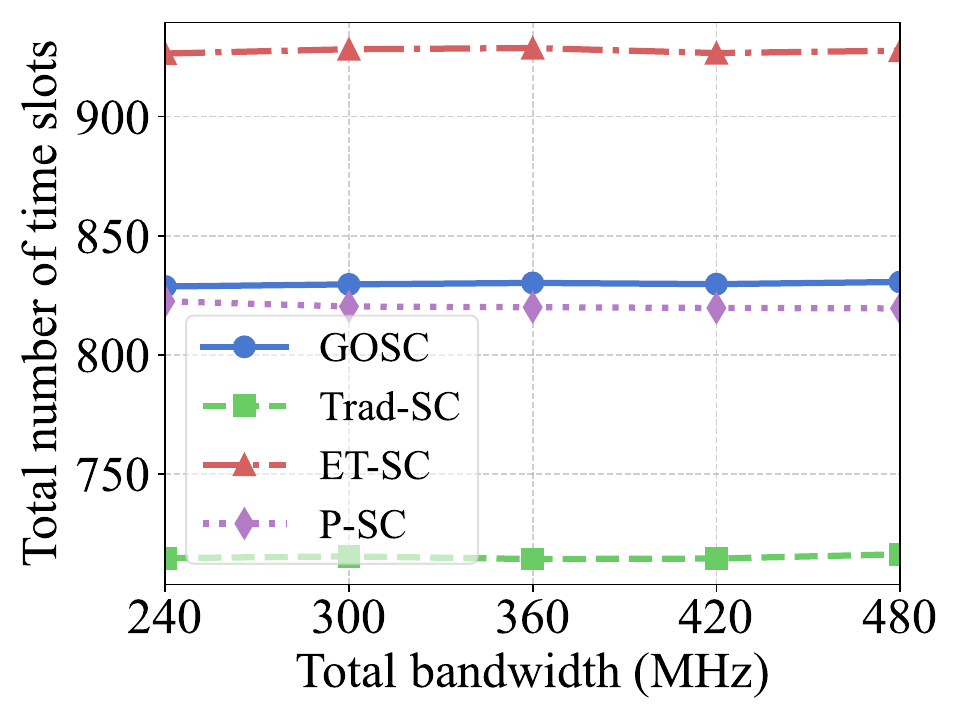}
    \label{bandwidth}
    } \hspace{0.01 cm}
    
    \caption{Performance comparison under different parameter settings.}
    \label{diff_para}
    \vspace{-0.3 cm}
\end{figure} 

To comprehensively evaluate the effectiveness and robustness of the proposed GOSC framework, we investigate the performance of four schemes under different parameter settings, including obstacle density, UAV maximum speed, and system bandwidth, as shown in Fig. 7. It is worth noting that the original training configuration corresponds to an obstacle density of 0.3 /m$^2$ , UAV maximum speed of 4~m/s, and bandwidth of 300~MHz. Under moderate environmental variations (e.g., lower obstacle densities of 0.1 /m$^2$ and 0.2 /m$^2$, as well as all considered UAV maximum speeds and bandwidths), the learned policy can be directly applied without retraining, confirming the robustness and generalization capability of the proposed GOSC framework under moderate variations. However, when the environment becomes more complex (e.g., obstacle densities of 0.4 /m$^2$ and 0.5 /m$^2$), retraining or fine-tuning is required to maintain optimal performance.

\figref{obs_density} presents the performance comparison under varying obstacle densities. As the obstacle density increases, all schemes exhibit an increase in the number of transmitted sensing and C\&C signals, as well as the total time slots, due to the more complex environment requiring more frequent updates. Compared with the baseline schemes, GOSC consistently achieves a significantly lower number of transmissions while maintaining competitive time efficiency. This demonstrates its ability to effectively suppress redundant transmissions by selectively delivering task-relevant information. In contrast, Trad-SC maintains the highest transmission overhead due to its continuous transmission nature, while ET-SC adapts to environmental changes but still incurs higher signaling cost than GOSC. \figref{uav_speed} illustrates the performance under different UAV maximum speeds. As the UAV speed increases, the total number of transmitted signals and the total number of time slots decrease for all schemes, since the UAV reaches the destination more quickly and requires fewer signal updates. GOSC consistently maintains the lowest transmission overhead across all speeds, highlighting its efficiency in dynamic environments. Moreover, the performance gap between GOSC and the baselines becomes more pronounced at lower speeds, where redundant C\&C signals are more likely to occur.

\figref{bandwidth} shows the performance comparison under varying bandwidths. It can be observed that both the total number of transmitted signals and the total number of time slots remain nearly unchanged across different bandwidth values for all schemes. This behavior can be explained from both communication and sensing perspectives. From the communication perspective, increasing bandwidth reduces transmission latency. However, due to the LoS-dominant channel conditions and the small payload size of C\&C data, the transmission delay is already at the microsecond level. As a result, bandwidth variation provides negligible impact in overall latency performance. From the sensing perspective, although increasing bandwidth affects sensing SNR, it induces opposite effects on sensing accuracy: the AoA variance $\sigma^2(\theta(t_i))$ increases, while the range variance $\sigma^2(r(t_i))$ decreases. After coordinate system transformation based on \textit{\textbf{Lemma 1}}, these effects largely offset each other on $\sigma_x^2(t_i)$ and $\sigma_y^2(t_i)$, resulting in nearly identical transmission behavior across different bandwidths. Overall, GOSC consistently outperforms the baseline schemes by achieving the lowest signaling overhead while maintaining comparable task completion time.

\section{Conclusions}  \label{conclusion}
This paper investigated an integrated sensing and communication (ISAC)-enabled BS for the unmanned aerial vehicle (UAV) obstacle avoidance task, and proposed a goal-oriented semantic communication (GOSC) framework to efficiently transmit sensing and command-and-control (C\&C) signals. By integrating a Kalman filter (KF), a Mahalanobis distance-based dynamic window approach (MD-DWA), and an effectiveness-aware deep Q-network (E-DQN), the framework can efficiently transmit sensing and C\&C signals at times when the transmission can benefit for the effectiveness of the robotic task. Specifically, the KF effectively reduces redundant sensing signal transmissions while improving UAV position estimation through sensing–prediction fusion. The MD-DWA generates precise and collision-free C\&C signals by leveraging a mathematically derived minimum Mahalanobis distance. The E-DQN  further enhances communication efficiency by transmitting signals only when their value of information (VoI) is significant.
Extensive simulations demonstrate that, compared to the traditional continuous ISAC transmission framework, our proposed GOSC framework can achieve the same 100\% task success rate while reducing the number of transmitted sensing and C\&C signals by 92.4\% and the required transmission time slots by 85.5\%.

\appendices
\section{Proof of Lemma 1}
For easy description, we omit the index of time slot $t_i$. Define a conversion function from polar coordinate to Cartesian coordinate as
\begin{equation}
            \bm{f}(r,\theta) = \begin{bmatrix}
                x(r,\theta)\\
                y(r,\theta)
            \end{bmatrix} = \begin{bmatrix}
        r\cos(\theta) \\
        r\sin(\theta)
    \end{bmatrix}.
\end{equation}

We approximate $\bm{f}(r,\theta)$ by a first-order Taylor expansion around the nominal values $(\hat{r},\hat{\theta})$, which can be represented as
\begin{equation} \label{taylor}
    \bm{f}(r,\theta) \approx \bm{f}(\hat{r},\hat{\theta}) + \bm{J}(\hat{r},\hat{\theta}) \begin{bmatrix}
        r-\hat{r} \\
        \theta-\hat{\theta}
    \end{bmatrix},
\end{equation}
in which $\bm{J}(\hat{r},\hat{\theta})$ is the Jacobian matrix that obtains the first-order partial derivatives of the function $\bm{f}(r,\theta)$ at $(\hat{r},\hat{\theta})$, which can be written as 
\begin{equation}
    \bm{J}(\hat{r},\hat{\theta}) = \frac{\partial(x,y)}{\partial(r, \theta)}\Bigg|_{(\hat{r},\hat{\theta})} = \begin{bmatrix}
        \cos(\hat{\theta}),&-\hat{r}\sin(\hat{\theta}) \\
        \sin(\hat{\theta}),&\hat{r}\cos(\hat{\theta})
    \end{bmatrix}.
\end{equation}

After resorting \eqref{taylor}, we can obtain
\begin{equation}
    \begin{bmatrix}
        \epsilon_x \\
        \epsilon_y
    \end{bmatrix} \approx \bm{J}(\hat{r},\hat{\theta})
        \begin{bmatrix}
        \epsilon_r \\
        \epsilon_{\theta}
    \end{bmatrix}.
\end{equation}

\section{Proof of Lemma 2}
Let $\bm{z}$ denote the relative position between the UAV and the obstacle at the minimum Mahalanobis distance, we have $\bm{z} \sim \mathcal N(\bm\mu,\bm\Sigma)$, in which
\begin{equation} 
\bm\mu = \begin{bmatrix}
\breve p_x(t_{i+a^\star})- p_{o^\star x} \\ \breve p_y(t_{i+a^\star})- p_{o^\star y} \end{bmatrix}, 
\quad 
\bm\Sigma = \mathring{\bm{\Gamma}}(t_{i+a^\star}) +\bm{\Phi}_{o^\star}(t_i).
\end{equation}

According to equation (7) in \cite{chi}, the squared Mahalanobis distance
$\|\bm z-\bm\mu\|_M^2= (\bm z-\bm\mu)^\top \bm\Sigma^{-1} (\bm z-\bm\mu)$ follows a chi-squared ($\chi^2$) distribution with degrees of freedom equal to the dimension of $\bm z$, i.e., $(\bm z -\bm\mu) \sim \chi^2_2$. For a tolerated collision probability $\varrho$, define the confidence threshold $\varpi = \chi^2_{2,\;1-\varrho}$. The corresponding $(1-\varrho)$-confidence ellipse is given as
\begin{equation}
    \mathcal E_{\varpi} = \big\{ \bm e \in \mathbb R^{2\times 1} : \bm e^\top \bm\Sigma^{-1} \bm e \le \varpi \big\},
\end{equation}
where $\bm e = \bm z -\bm\mu $ has zero mean. The UAV also has a fixed safety distance $\mathrm{D}_{\text{safe}}$. After accounting for this distance, if the confidence ellipse does not contain the origin point, we can determine that the UAV can avoid collision with probability at least $(1-\varrho)$. The condition can be written as 
\begin{equation}\label{cond}
\bm{0} \notin \bm{\mu} + \mathcal E_{\varpi} \oplus \mathcal{D},
\end{equation}
where \(\mathcal{D}=\{\bm u:\|\bm u\|\le \mathrm{D}_{\text{safe}}\}\) and \(\oplus\) denotes the Minkowski sum. Equivalently, \eqref{cond} can be transformed as
\begin{equation}\label{no_origin}
    \min_{\bm e\in\mathcal E_{\varpi},\,\|\bm u\|\le \mathrm{D}_{\text{safe}}} \|\bm\mu + \bm e + \bm u\|_M > 0.
\end{equation}

By the triangle inequality, we have
\begin{equation}
\begin{aligned}
    \min_{\bm e\in\mathcal E_{\varpi},\,\|\bm u\|\le \mathrm{D}_{\text{safe}}} \|\bm\mu + \bm e + \bm u\|_M \ge ~ &\|\bm\mu\|_M - \sup_{\bm e\in\mathcal E_{\varpi}}\|\bm e\|_M \\
    &- \sup_{\|\bm u\|_2\le  \mathrm{D}_{\text{safe}}}\|\bm u\|_M,
\end{aligned}
\end{equation}
where $\sup(\cdot)$ denotes the least upper bound of a set. By definition of $\mathcal E_{\varpi}$, we have $\sup_{\bm e\in\mathcal E_{\varpi}}\|\bm e\|_M=\sqrt{\varpi}$. To bound \(\sup_{\|\bm u\|_2\le  \mathrm{D}_{\text{safe}}}\|u\|_M\), we use the Rayleigh quotient inequality \cite{rayleigh}, which can be written as
\begin{equation}
    \begin{aligned}
        \bm u^\top\bm\Sigma^{-1}\bm u = \frac{\bm u^\top\bm\Sigma^{-1}\bm u}{\bm u^\top \bm u} \bm u^\top \bm u \le \lambda_{\max}(\bm\Sigma^{-1})\|\bm u\|^2 = \frac{\|\bm u\|^2}{\lambda_{\min}(\bm\Sigma)},
    \end{aligned}
\end{equation}
since $\bm\Sigma$ is symmetric positive definite, its smallest eigenvalue satisfies \(\lambda_{\min}(\bm\Sigma)>0\). Taking square roots gives
\begin{equation}
    \|\bm u\|_M \le \frac{\|\bm u\|}{\sqrt{\lambda_{\min}(\bm\Sigma)}}.
\end{equation}

Hence
\begin{equation}
    \sup_{\|\bm u\|_2\le \mathrm{D}_{\text{safe}}}\|\bm u\|_M \le \frac{ \mathrm{D}_{\text{safe}}}{\sqrt{\lambda_{\min}(\bm\Sigma)}}.
\end{equation}

Therefore, a sufficient deterministic condition for \eqref{no_origin} to hold is
\begin{equation}
 \|\bm\mu\|_M > \sqrt{\varpi} + \frac{\mathrm{D}_{\text{safe}}}{\sqrt{\lambda_{\min}(\bm\Sigma)}}.
\end{equation}

\balance
\bibliographystyle{IEEEtran}
\bibliography{IEEEabrv,bib}

\begin{thebibliography}{10}
\providecommand{\url}[1]{#1}
\csname url@samestyle\endcsname
\providecommand{\newblock}{\relax}
\providecommand{\bibinfo}[2]{#2}
\providecommand{\BIBentrySTDinterwordspacing}{\spaceskip=0pt\relax}
\providecommand{\BIBentryALTinterwordstretchfactor}{4}
\providecommand{\BIBentryALTinterwordspacing}{\spaceskip=\fontdimen2\font plus
\BIBentryALTinterwordstretchfactor\fontdimen3\font minus \fontdimen4\font\relax}
\providecommand{\BIBforeignlanguage}[2]{{%
\expandafter\ifx\csname l@#1\endcsname\relax
\typeout{** WARNING: IEEEtran.bst: No hyphenation pattern has been}%
\typeout{** loaded for the language `#1'. Using the pattern for}%
\typeout{** the default language instead.}%
\else
\language=\csname l@#1\endcsname
\fi
#2}}
\providecommand{\BIBdecl}{\relax}
\BIBdecl

\bibitem{OA}
A.~Pandey, S.~Pandey, and D.~Parhi, ``Mobile robot navigation and obstacle avoidance techniques: A review,'' \emph{Int. Robot. Autom. J.}, vol.~2, no.~3, pp. 1--12, May 2017.

\bibitem{sensor_fusion}
J.~Lv, C.~Qu, S.~Du, X.~Zhao, P.~Yin, N.~Zhao, and S.~Qu, ``Research on obstacle avoidance algorithm for unmanned ground vehicle based on multi-sensor information fusion,'' \emph{Math. Biosci. Eng.}, vol.~18, no.~2, pp. 1022--1039, 2021.

\bibitem{OA_alg}
A.~N.~A. Rafai, N.~Adzhar, and N.~I. Jaini, ``A review on path planning and obstacle avoidance algorithms for autonomous mobile robots,'' \emph{J. Robot.}, vol. 2022, no.~1, pp. 1--14, Dec. 2022.

\bibitem{sensor}
X.~Liu, S.~Wen, Z.~Jiang, W.~Tian, T.~Z. Qiu, and K.~M. Othman, ``A multisensor fusion with automatic vision--{LiDAR} calibration based on factor graph joint optimization for {SLAM},'' \emph{IEEE Trans. Instrum. Meas.}, vol.~72, pp. 1--9, Oct. 2023.

\bibitem{OA_AI}
P.~Chen, J.~Pei, W.~Lu, and M.~Li, ``A deep reinforcement learning based method for real-time path planning and dynamic obstacle avoidance,'' \emph{Neurocomputing}, vol. 497, pp. 64--75, Aug. 2022.

\bibitem{ISAC}
F.~Liu, Y.~Cui, C.~Masouros, J.~Xu, T.~X. Han, Y.~C. Eldar, and S.~Buzzi, ``Integrated sensing and communications: Toward dual-functional wireless networks for {6G} and beyond,'' \emph{IEEE J. Sel. Areas Commun.}, vol.~40, no.~6, pp. 1728--1767, Jun. 2022.

\bibitem{reviewer4_1}
A.~Khalili, A.~Rezaei, D.~Xu, and R.~Schober, ``Energy-aware resource allocation and trajectory design for {UAV}-enabled {ISAC},'' in \emph{Proc. IEEE Global Commun. Conf. (GLOBECOM)}, Dec. 2023, pp. 4193--4198.

\bibitem{reviewer4_2}
A.~Khalili, A.~Rezaei, D.~Xu, F.~Dressler, and R.~Schober, ``Efficient {UAV} hovering, resource allocation, and trajectory design for {ISAC} with limited backhaul capacity,'' \emph{IEEE Trans. Wireless Commun.}, vol.~23, no.~11, pp. 17\,635--17\,650, Nov. 2024.

\bibitem{capacity}
J.~Liu, C.~Zhou, M.~Sheng, H.~Yang, X.~Huang, and J.~Li, ``Resource allocation for adaptive beam alignment in {UAV}-assisted integrated sensing and communication networks,'' \emph{IEEE J. Sel. Areas Commun.}, vol.~43, no.~1, pp. 350--363, Jan. 2025.

\bibitem{energy}
C.~Dou, N.~Huang, Y.~Wu, L.~Qian, and T.~Q.~S. Quek, ``Channel sharing aided integrated sensing and communication: An energy-efficient sensing scheduling approach,'' \emph{IEEE Trans. Wireless Commun.}, vol.~23, no.~5, pp. 4802--4814, May 2024.

\bibitem{mse}
J.~Miguel Mateos-Ramos, C.~Häger, M.~Furkan~Keskin, L.~Le~Magoarou, and H.~Wymeersch, ``Model-based end-to-end learning for multi-target integrated sensing and communication under hardware impairments,'' \emph{IEEE Trans. Wireless Commun.}, vol.~24, no.~3, pp. 2574--2589, 2025.

\bibitem{CRB}
Y.~Wang, M.~Tao, and S.~Sun, ``Cram\'{e}r--{R}ao bound analysis and beamforming design for integrated sensing and communication with extended targets,'' \emph{IEEE Trans. Wireless Commun.}, vol.~23, no.~11, pp. 15\,987--16\,000, Nov. 2024.

\bibitem{trade-off}
X.~Jing, F.~Liu, C.~Masouros, and Y.~Zeng, ``{ISAC} from the sky: {UAV} trajectory design for joint communication and target localization,'' \emph{IEEE Trans. Wireless Commun.}, vol.~23, no.~10, pp. 12\,857--12\,872, Oct. 2024.

\bibitem{continuous}
Z.~Lyu, G.~Zhu, and J.~Xu, ``Joint maneuver and beamforming design for {UAV}-enabled integrated sensing and communication,'' \emph{IEEE Trans. Wireless Commun.}, vol.~22, no.~4, pp. 2424--2440, Apr. 2023.

\bibitem{GOSC}
H.~Zhou, Y.~Deng, X.~Liu, N.~Pappas, and A.~Nallanathan, ``Goal-oriented semantic communications for {6G} networks,'' \emph{IEEE Internet Things Mag.}, vol.~7, no.~5, pp. 104--110, Sep. 2024.

\bibitem{JSCC}
E.~Erdemir, T.-Y. Tung, P.~L. Dragotti, and D.~G{\"u}nd{\"u}z, ``Generative joint source-channel coding for semantic image transmission,'' \emph{IEEE J. Sel. Areas Commun.}, vol.~41, no.~8, pp. 2645--2657, Aug. 2023.

\bibitem{Reviewer2}
Z.~Lyu, G.~Zhu, J.~Xu, B.~Ai, and S.~Cui, ``Semantic communications for image recovery and classification via deep joint source and channel coding,'' \emph{IEEE Trans. Wireless Commun.}, vol.~23, no.~8, pp. 8388--8404, Aug. 2024.

\bibitem{SR1}
Z.~Wang, Y.~Deng, and A.~Hamid~Aghvami, ``Goal-oriented semantic communications for avatar-centric augmented reality,'' \emph{IEEE Trans. Commun.}, vol.~72, no.~12, pp. 7982--7995, Dec. 2024.

\bibitem{SR2}
S.~Liu, N.~Li, Y.~Deng, and T.~Q.~S. Quek, ``Goal-oriented semantic communication for wireless visual question answering,'' \emph{IEEE J. Sel. Areas Commun.}, vol.~43, no.~12, pp. 4247--4261, Dec. 2025.

\bibitem{xu}
Y.~Xu, H.~Zhou, and Y.~Deng, ``Task-oriented semantics-aware communication for wireless {UAV} control and command transmission,'' \emph{IEEE Commun. Lett.}, vol.~27, no.~8, pp. 2232--2236, Aug. 2023.

\bibitem{wu}
W.~Wu, Y.~Yang, Y.~Deng, and A.~Hamid~Aghvami, ``Goal-oriented semantic communications for robotic waypoint transmission: The value and age of information approach,'' \emph{IEEE Trans. Wireless Commun.}, vol.~23, no.~12, pp. 18\,903--18\,915, Dec. 2024.

\bibitem{dect_errors}
J.~A. Zhang, F.~Liu, C.~Masouros, R.~W. Heath, Z.~Feng, L.~Zheng, and A.~Petropulu, ``An overview of signal processing techniques for joint communication and radar sensing,'' \emph{IEEE J. Sel. Top. Signal Process.}, vol.~15, no.~6, pp. 1295--1315, Nov. 2021.

\bibitem{LOS}
C.~You and R.~Zhang, ``{3D} trajectory optimization in {Rician} fading for {UAV}-enabled data harvesting,'' \emph{IEEE Trans. Wireless Commun.}, vol.~18, no.~6, pp. 3192--3207, Jun. 2019.

\bibitem{sensing_receiver}
L.~Pucci, E.~Paolini, and A.~Giorgetti, ``System-level analysis of joint sensing and communication based on {5G New Radio},'' \emph{IEEE J. Sel. Areas Commun.}, vol.~40, no.~7, pp. 2043--2055, Jul. 2022.

\bibitem{reciprocal}
J.~T. Rodriguez, F.~Colone, and P.~Lombardo, ``Supervised reciprocal filter for {OFDM} radar signal processing,'' \emph{IEEE Trans. Aerosp. Electron. Syst.}, vol.~59, no.~4, pp. 3871--3889, Aug. 2023.

\bibitem{MUSIC}
Q.~Zhang, ``Probability of resolution of the {MUSIC} algorithm,'' \emph{IEEE Trans. Signal Process.}, vol.~43, no.~4, pp. 978--987, Apr. 1995.

\bibitem{PIFFT}
M.~Gasior and J.~Gonzalez, ``Improving {FFT} frequency measurement resolution by parabolic and {Gaussian} spectrum interpolation,'' in \emph{AIP Conf. Proc.}, vol. 732, no.~1, 2004, pp. 276--285.

\bibitem{MUSIC_dev}
H.~L. Van~Trees, \emph{Optimum array processing: Part {IV} of detection, estimation, and modulation theory}.\hskip 1em plus 0.5em minus 0.4em\relax Hoboken, NJ, USA: Wiley, 2002.

\bibitem{PIFFT_dev}
A.~C. Turlapaty, Y.~Jin, and Y.~Xu, ``Range and velocity estimation of radar targets by weighted {OFDM} modulation,'' in \emph{Proc. IEEE Radar Conf. (RadarConf)}, May 2014, pp. 1358--1362.

\bibitem{first_oder_error}
F.~Gustafsson and G.~Hendeby, ``Some relations between extended and unscented {Kalman} filters,'' \emph{IEEE Trans. Signal Process.}, vol.~60, no.~2, pp. 545--555, Feb. 2012.

\bibitem{FSPL}
T.~S. Rappaport, G.~R. MacCartney, M.~K. Samimi, and S.~Sun, ``Wideband millimeter-wave propagation measurements and channel models for future wireless communication system design,'' \emph{IEEE Trans. Commun.}, vol.~63, no.~9, pp. 3029--3056, Sep. 2015.

\bibitem{MRC}
D.~Ciuonzo, G.~Romano, and P.~Salvo~Rossi, ``Performance analysis and design of maximum ratio combining in channel-aware {MIMO} decision fusion,'' \emph{IEEE Trans. Wireless Commun.}, vol.~12, no.~9, pp. 4716--4728, Sep. 2013.

\bibitem{BF}
J.~Tranter, N.~D. Sidiropoulos, X.~Fu, and A.~Swami, ``Fast unit-modulus least squares with applications in beamforming,'' \emph{IEEE Trans. Signal Process.}, vol.~65, no.~11, pp. 2875--2887, Jun. 2017.

\bibitem{obstacle_dynamics}
C.~Wang, Z.~Wei, W.~Jiang, H.~Jiang, and Z.~Feng, ``Cooperative sensing enhanced {UAV} path-following and obstacle avoidance with variable formation,'' \emph{IEEE Trans. Veh. Technol.}, vol.~73, no.~6, pp. 7501--7516, Jun. 2024.

\bibitem{scan_range}
D.~Huo, L.~Dai, R.~Chai, R.~Xue, and Y.~Xia, ``Collision-free model predictive trajectory tracking control for {UAVs} in obstacle environment,'' \emph{IEEE Trans. Aerosp. Electron. Syst.}, vol.~59, no.~3, pp. 2920--2932, Jun. 2023.

\bibitem{DWA}
M.~Dobrevski and D.~Sko\u{c}aj, ``Dynamic adaptive dynamic window approach,'' \emph{IEEE Trans. Robot.}, vol.~40, pp. 3068--3081, 2024.

\bibitem{inflation}
Z.~Jian, Z.~Liu, H.~Shao, X.~Wang, X.~Chen, and B.~Liang, ``Path generation for wheeled robots autonomous navigation on vegetated terrain,'' \emph{IEEE Robot. Autom. Lett.}, vol.~9, no.~2, pp. 1764--1771, Feb. 2024.

\bibitem{mahala}
H.~Ghorbani, ``Mahalanobis distance and its application for detecting multivariate outliers,'' \emph{Facta Univ., Math. Inform.}, vol.~34, no.~3, pp. 583--595, Jun. 2019.

\bibitem{DQN_nature}
V.~Mnih \emph{et~al.}, ``Human-level control through deep reinforcement learning,'' \emph{Nature}, vol. 518, no. 7540, pp. 529--533, Feb. 2015.

\bibitem{DQN}
P.~Luong, F.~Gagnon, L.-N. Tran, and F.~Labeau, ``Deep reinforcement learning-based resource allocation in cooperative {UAV}-assisted wireless networks,'' \emph{IEEE Trans. Wireless Commun.}, vol.~20, no.~11, pp. 7610--7625, Nov. 2021.

\bibitem{dqn_complexity}
J.~Xu, B.~Ai, and T.~Q.~S. Quek, ``Toward interference suppression: {RIS}-aided high-speed railway networks via deep reinforcement learning,'' \emph{IEEE Trans. Wireless Commun.}, vol.~22, no.~6, pp. 4188--4201, Jun. 2023.

\bibitem{period}
K.~Meng, Q.~Wu, S.~Ma, W.~Chen, K.~Wang, and J.~Li, ``Throughput maximization for {UAV}-enabled integrated periodic sensing and communication,'' \emph{IEEE Trans. Wireless Commun.}, vol.~22, no.~1, pp. 671--687, Jan. 2023.

\bibitem{ET}
X.-M. Zhang, Q.-L. Han, X.~Ge, and B.-L. Zhang, ``Accumulative-error-based event-triggered control for discrete-time linear systems: A discrete-time looped functional method,'' \emph{IEEE/CAA J. Autom. Sin.}, vol.~12, no.~4, pp. 683--693, Apr. 2025.

\bibitem{chi}
G.~Gallego, C.~Cuevas, R.~Mohedano, and N.~García, ``On the {Mahalanobis} distance classification criterion for multidimensional normal distributions,'' \emph{IEEE Trans. Signal Process.}, vol.~61, no.~17, pp. 4387--4396, Sep. 2013.

\bibitem{rayleigh}
S.~P. Boyd and L.~Vandenberghe, \emph{Convex Optimization}.\hskip 1em plus 0.5em minus 0.4em\relax Cambridge, U.K.: Cambridge Univ. Press, 2004.

\end{thebibliography}

\end{document}